\newtheorem{theorem}{Theorem}
\newtheorem{definition}[theorem]{Definition}
\newtheorem{lemma}[theorem]{Lemma}
\newtheorem{problem}[theorem]{Problem}
\DeclareMathOperator*{\dom}{dom}
\DeclareMathOperator*{\range}{range}
\DeclareMathOperator*{\Dither}{Dither}
\DeclareMathOperator*{\argmax}{\arg\,\max}
\newcommand{\bbE}{\mathbb{E}}
\newcommand{\bi}{\mathbf{i}}
\newcommand{\bp}{\mathbf{p}}
\newcommand{\bx}{\mathbf{x}}
\newcommand{\bz}{\mathbf{z}}
\newcommand{\calA}{\mathcal{A}}
\newcommand{\calM}{\mathcal{M}}
\newcommand{\calT}{\mathcal{T}}
\newcommand{\bin}{b_{\text{in}}}
\newcommand{\bout}{b_{\text{out}}}
\newcommand{\Bin}{B_{\text{in}}}
\newcommand{\Bout}{B_{\text{out}}}
\def\calX{\mathcal{X}}
\title{Privacy-Aware Compression for Federated Data Analysis}
\author[1]{Kamalika Chaudhuri*}
\author[1]{Chuan Guo*}
\author[1]{Mike Rabbat}
\affil[1]{%
    Meta AI, USA. *Equal contribution.
}
\begin{document}
\maketitle



\begin{abstract}
Federated data analytics is a framework for distributed data analysis where a server compiles noisy responses from a group of distributed low-bandwidth user devices to estimate aggregate statistics. Two major challenges in this framework are privacy, since user data is often sensitive, and compression, since the user devices have low network bandwidth. Prior work has addressed these challenges separately by combining standard compression algorithms with known privacy mechanisms.
In this work, we take a holistic look at the problem and design a family of privacy-aware compression mechanisms that work for any given communication budget. We first propose a mechanism for transmitting a single real number that has optimal variance under certain conditions. We then show how to extend it to metric differential privacy for location privacy use-cases, as well as vectors, for application to federated learning. Our experiments illustrate that our mechanism can lead to better utility vs. compression trade-offs for the same privacy loss in a number of settings. \end{abstract}

\section{Introduction}





Federated data analytics is a framework for distributed data analysis and machine learning that is widely applicable to use-cases involving continuous data collection from a large number of devices. Here, a central server receives responses from a large number of distributed clients, and aggregates them to compute a global statistic or a machine learning model. An example is training and fine-tuning a speech-to-text model for a digital assistant; here a central server has a speech-to-text model, which is continuously updated based on feedback from client devices about the quality of predictions on their local data. Another example is maintaining real-time traffic statistics in a city for ride-share demand prediction; here, a central server located at the ride-share company collects and aggregates location data from a large number of user devices. 



Most applications of federated data analysis involve two major challenges --  privacy and compression. Since typical use-cases involve personal data from users, it is important to maintain their privacy. This is usually achieved by applying a local differentially private (LDP) algorithm~\citep{duchi2013local, kasiviswanathan2011can} on the raw inputs at the client device so that only sanitized data is transmitted to the server. Additionally, since the clients frequently have low-bandwidth high-latency uplinks, it is also important to ensure that they communicate as few bits to the server as possible. Most prior work in this area~\citep{girgis2021shuffled, kairouz2021distributed, agarwal2021skellam} addressed these two challenges separately -- first, a standard LDP algorithm is used to sanitize the client responses, and then standard compression procedures are used to compress them before transmission. However, this leads to a loss in accuracy of the client responses, ultimately leading to a loss in estimation or learning accuracy at the server. Moreover, each of these methods requires a very specific communication budget and is not readily adapted to other budgets.

In this work, we take a closer look at the problem and propose designing the privacy mechanism in conjunction with the compression procedure. To this end, we propose a formal property called {\em{asymptotic consistency}} that any private federated data analysis mechanism should possess. Asymptotic consistency requires that the aggregate statistics computed by the server converge to the non-private aggregate statistics as the number of clients grows. If the server averages the client responses, then a sufficient condition for asymptotic consistency is that the clients send an unbiased estimate of their input. Perhaps surprisingly, many existing mechanisms are not unbiased, and thus not asymptotically consistent.

We first consider designing such unbiased mechanisms that, given any communication budget $b$, transmit a continuous scalar value that lies in the interval $[0, 1]$ with local differential privacy and no public randomness. We observe that many existing methods, such as truncated Gaussian, lead to biased solutions and asymptotically inconsistent outcomes if the inputs lie close to an end-point of the truncation interval. Motivated by this, we show how to convert two existing local differentially private mechanisms for transmitting categorical values -- bit-wise randomized response~\citep{warner1965randomized} and generalized randomized response -- to unbiased solutions.

We then propose a novel mechanism, the Minimum Variance Unbiased (MVU) mechanism, that given $b$ bits of communication, exploits the ordinal nature of the inputs to provide a better privacy-accuracy trade-off. We show that if the input is drawn uniformly from the set $\{0, 1/(2^b - 1), \ldots, 1 \}$, then the MVU mechanism has minimum variance among all mechanisms that satisfy the local differential privacy constraints.  We show how to adapt this mechanism to metric differential privacy~\citep{andres2013geo} for location privacy applications. To adapt it to differentially private SGD (DP-SGD; ~\citet{abadi2016deep}), we then show how to extend it to vectors within an $L_p$-ball, and establish tight privacy composition guarantees. 

Finally, we investigate the empirical performance of the MVU mechanism in two concrete use-cases: distributed mean estimation and private federated learning. In each case, we compare our method with several existing baselines, and show that our mechanism can achieve better utility for the same privacy guarantees. In particular, we show that the MVU mechanism can match the performance of specially-designed gradient compression schemes such as stochastic signSGD~\citep{jin2020stochastic} for DP-SGD training of neural networks at the same communication budget.



\section{Preliminaries}

In private federated data analysis, a central server calculates aggregate statistics based on sensitive inputs from $n$ clients. The statistics might be as simple as the prevalence of some event, or as complicated as a gradient to a large neural network. To preserve privacy, the clients transmit a sanitized version of their input to the server. Two popular privacy notions used for sanitization are local differential privacy~\citep{duchi2013local, kasiviswanathan2011can} and metric differential privacy~\citep{andres2013geo}.  

\subsection{Privacy Definitions}


\begin{definition}
A randomized mechanism $\calM$ with domain $\dom(\calM)$ and range $\range(\calM)$ is said to be $\epsilon$-local differentially private (LDP) if for all pairs $x$ and $x'$ in the domain of $\calM$ and any $S \subseteq \range(\calM)$, we have that:
\[ \Pr(\calM(x) \in S) \leq e^{\epsilon} \Pr(\calM(x') \in S). \]
\end{definition}
Here $\epsilon$ is a privacy parameter where lower $\epsilon$ implies better privacy. The LDP mechanism $\calM$ is run on the client side, and the result is transmitted to the server. We assume that the clients and the server do not share any randomness. It might appear that a local DP requirement implies that a client's response contains very little useful information. While each individual response may be highly noisy, the server is still able to obtain a fairly accurate estimate of an \emph{aggregate property} if there are enough clients. Thus, the challenge in private federated data analysis is to design protocols --- privacy mechanisms for clients and aggregation algorithms for servers --- so that client privacy is preserved, and the server can obtain an accurate estimate of the desired statistic. 

A related definition is metric differential privacy (metric-DP)~\citep{chatzikokolakis2013broadening}, which is also known as geo-indistinguishability~\citep{andres2013geo} and is commonly used to quantify location privacy.

\begin{definition}\label{def:metricdp}
A randomized mechanism $\calM$ with domain $\dom(\calM)$ and range $\range(\calM)$ is said to be $\epsilon$-metric DP with respect to a metric $d$ if for all pairs $x$ and $x'$ in the domain of $\calM$ and any $S \subseteq \range(\calM)$, we have that:
\[ \Pr(\calM(x) \in S) \leq e^{\epsilon d(x, x')} \Pr(\calM(x') \in S). \]
\end{definition}

Metric DP offers granular privacy that is quantified by the metric $d$ -- inputs $x$ and $x'$ that are close in $d$ are indistinguishable, while those that are far apart in $d$ are less so. 

\subsection{Problem Statement}

In addition to balancing privacy and accuracy, a bottleneck of federated analytics is communication since client devices typically have limited network bandwidth. Thus, the goal is to achieve privacy and accuracy along with a limited amount of communication between clients and servers. We formalize this problem as follows.  

\begin{problem}\label{prob:fl}
Suppose we have $n$ clients with sensitive data $x_1, \ldots, x_n$ where each $x_i$ lies in a domain $\calX$, and a central server $S$ seeks to approximate an aggregate statistic $\calT_n$. Our goal is to design two algorithms, a client-side mechanism $\calM$ and a server-side aggregation procedure $\calA_n$, such that the following conditions hold:
\begin{enumerate}
\item $\calM$ is $\epsilon$-local DP (or $\epsilon$-metric DP). 
\item The output of $\calM$ can be encoded in $b$ bits.
\item $\calA_n(\calM(x_1), \ldots, \calM(x_n))$ is a good approximation to $\calT_n(x_1, \ldots, x_n)$. 
\end{enumerate}
\end{problem}

Prior works addressed the communication challenge by making the clients use a standard local DP mechanism followed by a standard quantization process. We develop methods where both mechanisms are designed together so as to obtain high accuracy at the server end.


\subsection{Asymptotic Consistency}

We posit that any good federated analytics solution $(\calM, \calA_n)$ where $\calM$ is a client mechanism and $\calA_n$ is the server-side aggregation procedure should have an {\em{asymptotic consistency}} property. Loosely speaking, this property ensures that the server can approximate the target statistic $\calT_n$ arbitrarily well with clients. Formally,

\begin{definition}
We say that a private federated analytics protocol is {\em{asymptotically consistent}} if the output of the server's aggregation algorithm $\calA_n( \calM(x_1), \ldots, \calM(x_n))$ approaches the target statistic $\calT_n(x_1, \ldots, x_n)$ as $n \rightarrow \infty$. In other words, for any $\alpha, \delta > 0$, there exists an $n_0$ such that for all $n \geq n_0$, we have:
\[ \Pr(| \calA_n(\calM(x_1), \ldots, \calM(x_n)) - \calT_n(x_1, \ldots, x_n)| \geq \alpha) \leq \delta \]
\end{definition}

While the server can use any aggregation protocol $\calA_n$, the most common is a simple averaging of the client responses --  $\calA_n(\calM(x_1), \ldots, \calM(x_n)) = \frac{1}{n} \sum_i \calM(x_i)$. It is easy to show the following lemma.


\begin{lemma} \label{lem:unbiased}
If $\calM(x)$ is unbiased for all $x$ and has bounded variance, and if $\calA_n$ computes the average of the client responses, then the federated analytics solution is asymptotically consistent.
\end{lemma}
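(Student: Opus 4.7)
The plan is to show convergence in probability of the sample average $\tfrac{1}{n}\sum_i \calM(x_i)$ to the target statistic $\calT_n(x_1,\ldots,x_n)$ via a first-moment/second-moment argument, culminating in Chebyshev's inequality. Implicit in the statement is that the target $\calT_n$ compared with $\calA_n$ in the definition of asymptotic consistency is itself the empirical mean $\tfrac{1}{n}\sum_i x_i$; I would state this at the beginning of the proof so the quantities being compared match up.

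First I would use unbiasedness to compute the expectation of the server's output. Since the clients do not share randomness, the random variables $\calM(x_1),\ldots,\calM(x_n)$ are mutually independent, and by linearity of expectation,
\[
\bbE\!\left[\calA_n(\calM(x_1),\ldots,\calM(x_n))\right] \;=\; \frac{1}{n}\sum_{i=1}^n \bbE[\calM(x_i)] \;=\; \frac{1}{n}\sum_{i=1}^n x_i \;=\; \calT_n(x_1,\ldots,x_n).
\]
So the server's estimator is unbiased for $\calT_n$.

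Next I would control its variance. Let $\sigma^2$ be the uniform upper bound on $\mathrm{Var}(\calM(x))$ guaranteed by hypothesis. By independence of the client messages,
\[
\mathrm{Var}\!\left(\calA_n(\calM(x_1),\ldots,\calM(x_n))\right) \;=\; \frac{1}{n^2}\sum_{i=1}^n \mathrm{Var}(\calM(x_i)) \;\leq\; \frac{\sigma^2}{n}.
\]
Chebyshev's inequality then yields, for any $\alpha>0$,
\[
\Pr\!\left(\left|\calA_n(\calM(x_1),\ldots,\calM(x_n)) - \calT_n(x_1,\ldots,x_n)\right| \geq \alpha\right) \;\leq\; \frac{\sigma^2}{n\alpha^2}.
\]
Given $\alpha,\delta>0$, choosing $n_0 \geq \sigma^2/(\delta\alpha^2)$ makes the right-hand side at most $\delta$ for all $n \geq n_0$, which is exactly the definition of asymptotic consistency.

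There is no real hard step here; the only subtlety worth stating explicitly is the independence of $\calM(x_1),\ldots,\calM(x_n)$, which follows from the earlier assumption that clients and server share no randomness (and hence clients share no randomness with one another). Everything else is a routine application of linearity of expectation, Bienaymé's identity for variances of sums of independent variables, and Chebyshev's inequality.
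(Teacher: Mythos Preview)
Your proposal is correct and follows essentially the same route as the paper: show $\bbE[\calA_n]=\calT_n$ by unbiasedness and linearity, bound $\mathrm{Var}(\calA_n)\le \sigma^2/n$ using independence of the client responses, and finish with Chebyshev's inequality. You are simply more explicit than the paper about independence and about the choice of $n_0$, but the argument is the same.
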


While asymptotic consistency may seem basic, it is surprisingly not satisfied by a number of simple solutions. An example is when $\calM(x)$ is a Gaussian mechanism whose output is truncated to an interval $[a, b]$. In this case, if $x_i = a$ for all $i$, the truncated Gaussian mechanism will be biased with $\bbE[\calM(x_i)] > x_i$, and consequently the server's aggregate will not approach $a$ for any number of clients.

Some of the recently proposed solutions for federated learning are also not guaranteed to be asymptotically consistent. Examples include the truncated Discrete Gaussian mechanism~\citep{canonne2020discrete, kairouz2021distributed} as well as the Skellam mechanism~\citep{agarwal2021skellam}. While these mechanisms are unbiased if the range is unbounded and there are no communication constraints, their results do become biased after truncation. 


\subsection{Compression Tool: Dithering}

A core component of our proposed mechanisms is dithering -- a popular approach to quantization with a long history of use in communications~\citep{Schuchman1964dither,Gray1993dithered}, signal processing~\citep{Lipshitz1992quantization}, and more recently for communication-efficient distributed learning~\citep{Alistarh2017qsgd,Shlezinger2020uveqfed}. Suppose our goal is to quantize a scalar value $x \in [0,1]$ with a communication budget of $b$ bits. We consider the $B=2^b$ points $G=\{0, \frac{1}{B-1}, \frac{2}{B-1}, \dots, 1\}$ as the quantization lattice; \emph{i.e.}, the $B$ points uniformly spaced by $\Delta = 1/(B-1)$. Dithering can be seen as a random quantization function $\Dither : [0,1] \rightarrow G$ that is unbiased, \emph{i.e.}, $\bbE[\Dither(x)] = x$.\footnote{When the number of grid points $B$ is clear from the context, we simply write $\Dither(x)$ to simplify notation; otherwise we write $\Dither_B(x)$ to indicate the value of $B$.}  Moreover, the distribution of the quantization errors $\Dither(x) - x$ can be made independent of the distribution of $x$.

While there are many forms of dithered quantization~\citep{Gray1993dithered}, we focus on the following. If $x \in [\frac{i}{B-1}, \frac{i+1}{B-1})$ where $0 \leq i \leq B-1$, then $\Dither(x) = \frac{i}{B-1}$ with probability $(B-1) (\frac{i+1}{B-1} - x)$, and $\Dither(x) = \frac{i+1}{B-1}$ with probability $(B-1)(x - \frac{i}{B-1})$. A simple calculation shows that $\bbE[\Dither(x)] = x$ and moreover that the variance is bounded above by $\bbE[(\Dither(x) - x)^2] \le \Delta^2 / 4$. This procedure is equivalent to the \emph{non-subtractive} dithering scheme $\Dither(x) = \min_{q \in G} |q - (x - U)|$, where $U$ is uniformly distributed over the interval $[-\Delta/2, \Delta/2]$; see, e.g.,~\cite[Lemma~2]{Aysal2008distributed}. 





\section{Scalar Mechanisms}
\label{sec:scalar}

We consider Problem~\ref{prob:fl} when the input $x_i$ is a scalar in the interval $[0, 1]$, and the statistic\footnote{To simplify notation, we drop the subscript $n$ from statistics $\calT_n$ and aggregation functions $\calA_n$, when the number of clients $n$ is clear from the context.} $\calT$ is the average $\frac{1}{n}\sum_{i=1}^{n} x_i$. Our server side aggregation protocol will also output an average of the client responses. Our goal now is to design a client-side mechanism $\calM$ that is $\epsilon$-local DP, unbiased, and can be encoded in $b$ bits.

\paragraph{Notation.} The inputs to our client-side mechanism $\calM$ are: a continuous value $x \in [0, 1]$, a privacy parameter $\epsilon$ and a communication budget $b$. The output is a number $i \in \{0, \ldots, B - 1 \}$ where $B = 2^b$, represented as a sequence of $b$ bits. Additionally, we have an alphabet $A = \{ a_0, \ldots, a_{B - 1}\}$ shared between the clients and server; a number $i$ transmitted by a client is decoded as the letter $a_i$ in $A$. The purpose of $A$ is to ensure unbiasedness.

\subsection{Strategy Overview}

\begin{algorithm}[t]
\caption{Strategy for privacy-aware compression}
\label{alg:overview}
\begin{algorithmic}[1]
\STATE {\bf{Input:}} $x \in [0, 1]$, privacy budget $\epsilon$, communication budget $b = \bout$, input bit-width $\bin$.
\STATE{\bf{Offline phase:}}
\STATE Let $\Bout = 2^b, \Bin = 2^{\bin}$.
\STATE Construct sampling probability matrix $P \in \mathbb{R}^{\Bin \times \Bout}$ and output alphabet $A = \{a_0,\ldots,a_{\Bout-1}\}$ to satisfy $\epsilon$-DP and unbiasedness constraints.
\STATE{\bf{Online phase:}}
\STATE $i = (\Bin-1) \cdot \Dither(x) \in \{0,1,\ldots,\Bin-1\}$.
\STATE Draw $j \in \{0,\ldots,\Bout-1\}$ from the categorical distribution defined by probability vector $P_i$.
\STATE {\textbf{Return}} $a_j$.
\end{algorithmic}
\end{algorithm}

Our privacy-aware compression mechanism operates in two phases.
In the offline phase, it selects an input bit-width value $\bin$ and pre-computes an output alphabet $A$ and a sampling probability matrix $P \in \mathbb{R}^{\Bin \times \Bout}$, where $\Bout = 2^b, \Bin = 2^{\bin}$. Both $P$ and $A$ are shared with the server and all clients. In the online phase, the client-side mechanism $\calM$ first uses dithering to round an input $x \in [0,1]$ to the grid $\{0, \frac{1}{\Bin-1}, \ldots, 1\}$ while maintaining unbiasedness, and then draws an index $j$ from the categorical distribution defined by the probability vector $P_i$, where $i = (\Bin-1) \cdot \Dither(x)$. The client then sends $a_j$ to the server. Algorithm \ref{alg:overview} summarizes the procedure in pseudo-code. Note that the strategy generalizes to any bounded input range by scaling $x$ appropriately.


In order for $\calM$ to satisfy $\epsilon$-DP and unbiasedness, we must impose the following constraints for the sampling probability matrix $P = [p_{i,j}]$ and output alphabet $A = \{a_j\}_{j=0}^{\Bout-1}$:
\vspace{-2ex}
\begin{subequations} \label{eq:constraints}
\begin{align}
    \text{Row-stochasticity:} & \quad \sum_{j=0}^{\Bout-1} p_{i,j} = 1 \quad \forall i \label{eq:row-stochastic} \\
    \text{Non-negativity:} & \quad p_{i,j} \ge 0 \quad \forall i,j \label{eq:non-negative} \\
    \text{$\epsilon$-DP:} & \quad p_{i',j} e^{-\epsilon} \le p_{i,j} \le p_{i', j} e^\epsilon \quad \forall i \ne i' \label{eq:dp-constraints} \\
    \text{Unbiasedness:} & \quad \sum_{j=0}^{\Bout-1} a_j p_{i,j} = \frac{i}{\Bin-1} \quad \forall i. \label{eq:unbiased}
\end{align}
\end{subequations}
Conditions \eqref{eq:row-stochastic} and \eqref{eq:non-negative} ensure that $P$ is a probability matrix. Condition \eqref{eq:dp-constraints} ensures $\epsilon$-DP, while condition \eqref{eq:unbiased} ensures unbiasedness. Note that these constraints only define the feasibility conditions for $P$ and $A$, and hence form the basis for a broad class of private mechanisms.
In the following sections, we show that two variants of an existing local DP mechanism Randomized Response~\cite{warner1965randomized} -- bit-wise Randomized Response and generalized Randomized Response -- can be realized as special cases of this family of mechanisms. 


\subsection{Unbiased Bitwise Randomized Response}

Randomized Response (RR)~\citep{warner1965randomized} is one of the simplest LDP mechanisms that sanitizes a single bit. Given a bit $y \in \{ 0, 1\}$, the RR mechanism outputs the $y$ with some probability $p$ and the flipped bit $1 - y$ with probability $1 - p$. If $p = \frac{1}{1 + e^{-\epsilon}}$, then the mechanism is $\epsilon$-local DP.

\smallskip\noindent\textbf{Unbiased Bitwise Randomized Response Mechanism.} The RR mechanism does not directly apply to our task as it is biased and applies to one bit. We obtain unbiasedness by using the output alphabet $A = \{ - \frac{1}{e^{\epsilon} - 1}, \frac{e^{\epsilon}}{e^{\epsilon} - 1} \}$, and repeat the one-bit mechanism $b$ times on each bit of $x$, with a privacy budget of $\epsilon/b$ each time.
It is not hard to see that unbiased RR with $b=1$ is a special case of Algorithm \ref{alg:overview}. For $b>1$, we can construct the resulting probability matrix $P$ by applying unbiased RR to each bit independently and similarly obtain the resulting output alphabet $A$.

We prove in Appendix \ref{sec:proofs} that Unbiased Bitwise Multiple RR satisfies $\epsilon$-local DP and is unbiased.

\subsection{Unbiased Generalized Randomized Response}

Generalized Randomized Response is a simple generalization of the one-bit RR mechanism for sanitizing a categorical value $x \in \{1, \ldots, K\}$. The mechanism transmits $x$ with some probability $p$, and a draw from a uniform distribution over $\{1, \ldots, K\}$ with probability $1 - p$. The mechanism satisfies $\epsilon$-local DP when $p = \frac{e^{\epsilon} - 1}{K + e^{\epsilon} - 1}$.

\smallskip\noindent\textbf{Unbiased Generalized Randomized Response.} We can adapt Generalized RR to our task by dithering the input $x$ to the grid $\{ 0, \frac{1}{\Bout-1}, \ldots, 1\}$ where $\Bout = 2^{\bout}$, and then transmitting the result using Generalized RR. Alternatively, we can derive the sampling probability matrix $P = \frac{e^{\epsilon} - 1}{\Bout + e^{\epsilon} - 1} I_{\Bout} + \frac{1}{\Bout + e^{\epsilon} - 1}$, where $I_{\Bout}$ is the identity matrix.
However, this leads to a biased output. To address this, we change the alphabet to $A = \{ a_0, a_1, \ldots, a_{\Bout-1}\}$ such that unbiasedness is maintained.
Specifically, for any $i \in \{ 0, \ldots, \Bout - 1 \}$, we need to ensure that when the input is $\frac{i}{\Bout-1}$, the expected output is also $\frac{i}{\Bout-1}$, which reduces to the following equation:
\begin{equation*} \label{eqn:airappor}
 a_i \cdot \frac{e^{\epsilon} - 1}{\Bout + e^{\epsilon} - 1} + \sum_{j = 0}^{\Bout-1} a_j \cdot \frac{1}{ \Bout + e^{\epsilon} - 1} = \frac{i}{\Bout-1}.
\end{equation*}
Writing this down for each $i$ gives $\Bout$ linear equations, solving which will give us the values of $a_0, \ldots, a_{\Bout-1}$. We establish the privacy and unbiasedness properties of Unbiased Generalized RR in Appendix \ref{sec:proofs}. A similar unbiased adaptation was also considered by \cite{balle2019privacy}. 

\begin{figure*}[t]
\centering
\includegraphics[width=\linewidth]{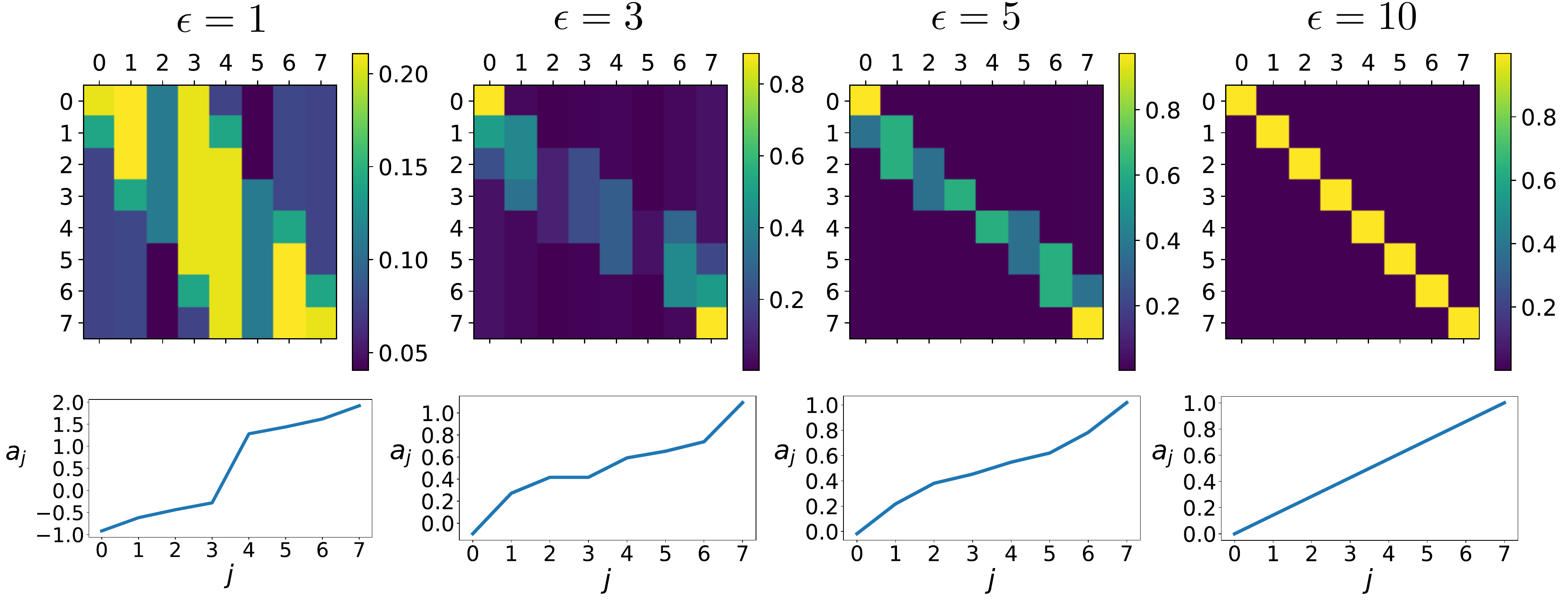}
\caption{Optimized sampling probability matrix $P$ (top row) and output alphabet $A = \{a_0,\ldots,a_{\Bout-1}\}$ (bottom row) of the MVU mechanism with $\bin = \bout = 3$ for $\epsilon=1,3,5,10$. At $\epsilon=1$, the DP constraint forces entries in each column to be similar, and the unbiasedness constraint causes the magnitude of $a_j$ to be large. At $\epsilon=10$, the weaker DP constraint allows the optimal $P$ matrix to become close to the identity matrix and $a_j \approx j/(B-1)$.}
\label{fig:p_and_alpha_samples}
\end{figure*}

\subsection{The MVU Mechanism}

A challenge with Unbiased Bitwise RR and Unbiased Generalized RR is that both algorithms are not intrinsically designed for ordinal or numerical values, which may result in poor accuracy upon aggregation. We next propose a new method that improves estimation accuracy by reducing the variance of each client's output while retaining unbiasedness and hence asymptotic consistency.

Our proposed method -- the \emph{Minimum Variance Unbiased} (MVU) mechanism --  addresses this problem by directly minimizing the variance of the client's output. This is done by solving the following optimization problem:
\begin{align} \label{eq:mvu_problem}
    \min_{\substack{p \in [0,1]^{\Bin \times \Bout} \\ a \in \mathbb{R}^{\Bout}}} & \quad \sum_{i=0}^{\Bin-1} \sum_{j=0}^{\Bout-1} p_{i,j} \left(\frac{i}{\Bin-1} - a_j\right)^2 \\
    \text{subject to} & \quad \text{Conditions } \eqref{eq:row-stochastic}-\eqref{eq:unbiased}. \nonumber
\end{align}
The objective in \eqref{eq:mvu_problem} measures the variance of the output of the mechanism when the input $i$ is uniformly distributed over the set $\{0, \frac{1}{\Bin-1}, \dots, 1\}$. Conditions \eqref{eq:row-stochastic}-\eqref{eq:unbiased} ensure that the MVU mechanism is $\epsilon$-DP and unbiased, hence satisfying requirements for our task.

\smallskip\noindent\textbf{Solving the MVU mechanism design problem.} We solve \eqref{eq:mvu_problem} using one of two approaches depending on size of the probability matrix $P$ and $\epsilon$. For smaller problems and when $\epsilon$ is not too small, we use a trust region interior-point solver~\citep{conn2000trust}. As $\epsilon$ approaches $0$, the problem becomes poorly conditioned and we only approximately solve the problem by relaxing the unbiasedness constraint~\eqref{eq:unbiased}. In this case we use an alternating minimization heuristic where we alternate between fixing the values $a_j$ and solving for $p_{i,j}$, and holding $p_{i,j}$ fixed and solving for $a_j$, while incorporating constraint~\eqref{eq:unbiased} as a soft penalty in the objective. Each of the corresponding subproblems is a quadratic program and can be solved efficiently. Figure~\ref{fig:p_and_alpha_samples} shows examples of the MVU mechanism for $\bin = \bout = 3$ and $\epsilon \in \{1, 3, 5, 10\}$ obtained using the trust region solver.

\smallskip\noindent\textbf{Relationship between DP and compression.} The MVU mechanism highlights an intriguing connection between DP and compression. Since the mechanism hides information in the input $x$ by perturbing it with random noise, as $\epsilon \rightarrow 0$, fewer bits are required to describe the noisy output $\calM(x)$. In the limiting case of $\epsilon=0$, all information is lost and the output can be described by zero bits. In Appendix \ref{sec:experiment_details}, we demonstrate this argument concretely by showing that as $\epsilon \rightarrow 0$, the marginal benefit of having a larger communication budget decreases. 

\section{Extensions}
\label{sec:extension}

We now show how to extend the MVU mechanism to obtain privacy-aware and accurate compression mechanisms for metric-DP and vector spaces.

\subsection{Metric DP}

In location privacy, client devices send their obscured locations to a central server for aggregation. Metric DP (Definition~\ref{def:metricdp}) is a variation of LDP that applies to this use-case. We are given a position $x$ and a metric $d$ which measures how far apart two positions are. Our goal is to output a private position $x'$ so that fine-grained properties of $x$ (such as, exact address, city block) are hidden, while coarse-grained properties (such as, city, or zip-code) are preserved. 

We show how to adapt the MVU mechanism to metric DP. For simplicity, suppose that we measure position on the line, so $x \in [0, 1]$.  We modify Condition~\eqref{eq:dp-constraints} to instead satisfy the metric DP constraint with respect to the metric $d$:
\begin{equation} \label{eqn:qpmetricdp}
	p_{i', j} e^{-\epsilon d(i/(\Bin-1), i'/(\Bin-1))} \leq p_{ i, j} \leq p_{ i', j} e^{\epsilon d( i/(\Bin-1), i'/(\Bin-1))}.
\end{equation}
Thus we can get an MVU mechanism for metric DP by solving the modified optimization problem in \eqref{eq:mvu_problem} and following the same procedure in Algorithm~\ref{alg:overview}.

\subsection{Extension to Vector Spaces}

We next look at extending the MVU mechanism to vector spaces. Specifically, a client now holds a $d$-dimensional vector $\bx$ in a domain $\calX \subseteq \mathbb{R}^d$, and its goal is to output an $\epsilon$-local DP version that can be communicated in $bd$ bits. The domain $\calX$ is typically a unit $L_p$-norm ball for $p \geq 1$.


A plausible approach is to apply the scalar MVU mechanism independently for each coordinate of $\bx$. While this will provide the optimal accuracy for $p = \infty$, for $p < \infty$, the client's variance will be higher. A second approach is to extend the MVU mechanism directly to $\calX$ by using an alphabet $A \times A \times \ldots \times A = A^d$ and then solving the corresponding optimization problem~\eqref{eq:mvu_problem}. Unfortunately this is computationally intractable even for moderate $d$.


Instead, we show how to obtain a more computationally tractable approximation when $\calX$ is an $L_p$-ball. We are motivated by the following lemma.


\begin{lemma} \label{lem:metrictovector}
Let $\calX$ be the unit $L_p$-ball with diameter $\Delta$. Suppose $\calM$ is an $\epsilon$-metric DP scalar mechanism with $d(y, y') = |y - y'|^p$. Then, the mechanism $\calM_d: \calX \rightarrow \mathbb{R}^d$ that maps $\bx$ to the vector $(\calM(\bx_1), \ldots, \calM(\bx_d))$ is $\epsilon \Delta^p$-local DP. Additionally, if $\calM$ is unbiased, then $\calM_d$ is unbiased as well.
\end{lemma}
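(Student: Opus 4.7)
The plan is to exploit the independence of the coordinate-wise application of $\calM$ and to relate the sum of coordinate-wise distances under the metric $d(y,y')=|y-y'|^p$ to the $L_p$ diameter of $\calX$. First I would consider a product measurable event of the form $S = S_1 \times \cdots \times S_d \subseteq \mathbb{R}^d$. Since the randomness used by $\calM$ across coordinates is independent, I get
\[
\Pr(\calM_d(\bx) \in S) = \prod_{i=1}^{d} \Pr(\calM(\bx_i) \in S_i).
\]
Applying the $\epsilon$-metric DP property of $\calM$ (with $d(y,y')=|y-y'|^p$) to each factor, each term is bounded by $e^{\epsilon |\bx_i - \bx_i'|^p}\Pr(\calM(\bx_i') \in S_i)$, so the product collapses to
\[
\Pr(\calM_d(\bx) \in S) \leq e^{\epsilon \sum_i |\bx_i - \bx_i'|^p}\,\Pr(\calM_d(\bx') \in S) = e^{\epsilon \|\bx-\bx'\|_p^p}\,\Pr(\calM_d(\bx') \in S).
\]

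Next I would use that $\bx, \bx' \in \calX$, an $L_p$-ball of diameter $\Delta$, so $\|\bx - \bx'\|_p \leq \Delta$ and hence $\|\bx - \bx'\|_p^p \leq \Delta^p$. This yields the bound $\Pr(\calM_d(\bx) \in S) \leq e^{\epsilon \Delta^p} \Pr(\calM_d(\bx') \in S)$, which is exactly $\epsilon \Delta^p$-local DP restricted to product sets. To conclude for arbitrary measurable $S \subseteq \mathbb{R}^d$, I would invoke a standard measure-theoretic extension: the collection of rectangles forms a $\pi$-system generating the product Borel $\sigma$-algebra, and the inequality (being preserved under countable disjoint unions of rectangles and monotone limits) extends to the entire $\sigma$-algebra by a $\pi$-$\lambda$ or monotone class argument. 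This is the step I expect to write most carefully, since the rest is essentially a one-line computation; but it is entirely routine.

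For unbiasedness, since $\calM_d$ applies $\calM$ coordinatewise and independently, linearity of expectation (applied coordinatewise) gives
\[
\bbE[\calM_d(\bx)]_i = \bbE[\calM(\bx_i)] = \bx_i
\]
whenever $\calM$ is unbiased, so $\bbE[\calM_d(\bx)] = \bx$, completing the proof. The main conceptual point, which I would highlight, is that taking $d(y,y') = |y-y'|^p$ rather than the more natural $|y-y'|$ is precisely what makes the per-coordinate privacy losses combine into the $L_p^p$-norm, which is controlled by the $L_p$-diameter of $\calX$.
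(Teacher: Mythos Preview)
Your proposal is correct and follows essentially the same approach as the paper: factor the output distribution over coordinates by independence, apply the scalar $\epsilon$-metric DP bound per coordinate, sum the exponents to obtain $\epsilon\|\bx-\bx'\|_p^p$, and bound this by $\epsilon\Delta^p$ using the $L_p$-diameter. The only cosmetic difference is that the paper works directly with likelihood (density) ratios $Q_0(\bz)/Q_1(\bz)$ rather than probabilities of product sets, which sidesteps the measure-theoretic extension you outline; in the paper's setting the outputs are supported on a finite alphabet anyway, so either route is immediate.
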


Lemma \ref{lem:metrictovector} suggests the following algorithm: Use the MVU mechanism for $\epsilon$-metric DP with $d(y, y') = |y - y'|^p$ for each coordinate, then combine to get an $\epsilon$-local DP solution for vectors with $L_p$-sensitivity $\Delta$. Since $\| \cdot \|_\infty \leq \| \cdot \|_p$, each coordinate of $\bx$ lies in a bounded range $[-\Delta,\Delta]$, so we can scale $\bx$ by $\bx' \leftarrow (\bx + \Delta) / 2\Delta$ so that all entries belong to $[0,1]$ and the MVU mechanism can be applied to $\bx'$. Note that this scaling operation changes the $L_p$-sensitivity to $1/2$.

This solution is computationally tractable since we only need to solve an optimization problem for the scalar MVU mechanism -- so involving $\approx \Bout^2 = 2^{2\bout} $ variables and constraints (instead of $\approx 2^{2\bout d}$). We investigate how this mechanism works in practice in Section~\ref{sec:experiments}.

\subsection{Composition using R\'{e}nyi-DP}
\label{sec:composition}


Repeated applications of the MVU mechanism will give an additive sequential privacy composition guarantee as in standard $\epsilon$-DP. We next show how to get tighter composition bounds for the MVU mechanism using RDP accounting as in~\cite{mironov2017renyi}.

Suppose that $\bx, \bx' \in \{0,1/(\Bin-1),\ldots,1\}^d$ are quantized $d$-dimensional vectors, and let $Q_0, Q_1$ be the output distributions of the mechanism $\calM$ for inputs $\bx, \bx'$, respectively. By the definition of R\'{e}nyi divergence~\citep{renyi1961measures},
\begin{equation*}
D_\alpha(Q_0 || Q_1)
= \frac{1}{\alpha-1} \sum_{l=1}^d \log \sum_{j=0}^{\Bin-1} \frac{p_{\bi_l,j}^\alpha}{p_{\bi_l',j}^{\alpha-1}},
\end{equation*}
where $\bi, \bi' \in \{0,1,\ldots,\Bin-1\}^d$ are such that $\bx = \bi / (\Bin-1)$ and $\bx' = \bi' / (\Bin-1)$. Let $D^\alpha$ denote the $\Bin \times \Bin$ matrix with entries $D^\alpha_{i,i'} = \frac{1}{\alpha-1} \log \sum_{j=0}^{\Bin-1} p_{i,j}^\alpha / p_{i',j}^{\alpha - 1}$. Then, computation of the $\alpha$-RDP parameter for $\calM$ can be formulated as the following combinatorial optimization problem:
\begin{equation*}
\label{eq:comb_opt}
\max_{\bi, \bi' \in \{0,1,\ldots,\Bin-1\}^d} \: \sum_{l=1}^d D^\alpha_{\bi_l, \bi_l'}
\quad \text{s.t. } \: \| \bi - \bi' \|_p^p \leq (\Bin-1)^p \Delta^p.
\end{equation*}
This optimization problem is in fact an instance of the \emph{multiple-choice knapsack problem}~\citep{sinha1979multiple} and admits an efficient linear program relaxation by converting the integer vectors $\bi, \bi'$ to probability vectors, \emph{i.e.},
\begin{align}
\label{eq:comb_opt_lp}
\max_{\bp \in \mathbb{R}^{d \times \Bin \times \Bin}} &\quad \sum_{l=1}^d \langle D^\alpha, \bp_l \rangle_F \\
\text{subject to } &\quad \sum_{l=1}^d \langle C, \bp_l \rangle_F \leq (\Bin-1)^p \Delta^p \nonumber \\
&\quad \sum_{i,j} (\bp_l)_{ij} \leq 1 \text{ and } \bp_l \geq 0 \; \forall l, \nonumber
\end{align}
where $\langle \cdot, \cdot \rangle_F$ denotes Frobenius (vectorized) inner product and $C$ denotes the distance matrix with entries $C_{ij} = (i - j)^p$.
This LP relaxation can still be prohibitively expensive to solve for large $d$ since $\bp$ contains $d\Bin^2$ variables. Fortunately, in such cases, we can obtain an upper bound via the greedy solution; see Appendix \ref{sec:proofs} for the proof.

\begin{lemma}\label{lem:greedy}
Let $(i^*, j^*) = \argmax_{i,j} D^\alpha_{ij} / C_{ij}$ and let $d_0 = (\Bin-1)^p \Delta^p / C_{i^* j^*}$. Then \eqref{eq:comb_opt_lp} $\leq d_0 D_{i^* j^*}^\alpha$.
\end{lemma}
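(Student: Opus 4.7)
The plan is a fractional-knapsack style upper bound on \eqref{eq:comb_opt_lp}: pointwise-bound each $D^\alpha_{ij}$ by a scalar multiple of $C_{ij}$, then collapse the double sum using the single knapsack constraint. Equivalently, this amounts to exhibiting a dual-feasible point (knapsack multiplier $\lambda = D^\alpha_{i^*j^*}/C_{i^*j^*}$, all per-coordinate simplex multipliers zero) and invoking weak LP duality, but I will argue directly.

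First I would dispose of the diagonal. For $i = j$ we have $C_{ii} = 0$ and, directly from the defining formula, $D^\alpha_{ii} = \frac{1}{\alpha-1} \log \sum_j p_{i,j}^\alpha / p_{i,j}^{\alpha-1} = \frac{1}{\alpha-1} \log \sum_j p_{i,j} = 0$, where the last equality uses row-stochasticity of $P$. Thus diagonal entries contribute nothing to either the objective or the constraint, and the $\argmax$ defining $(i^*, j^*)$ may be restricted to off-diagonal indices where $C_{ij} > 0$. Nonnegativity of $D^\alpha_{ij}$ off the diagonal (needed so the pointwise inequality below preserves direction inside a nonnegative sum) follows from $D^\alpha_{ij}$ being a R\'enyi divergence between two probability vectors. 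By the very choice of $(i^*, j^*)$ we then have $D^\alpha_{ij} \leq (D^\alpha_{i^*j^*}/C_{i^*j^*})\, C_{ij}$ for all off-diagonal $(i,j)$. Plugging this bound into the objective of an arbitrary feasible $\bp$ and then using the knapsack constraint yields
\begin{align*}
\sum_{l=1}^d \langle D^\alpha, \bp_l \rangle_F
&\;\leq\; \frac{D^\alpha_{i^*j^*}}{C_{i^*j^*}} \sum_{l=1}^d \langle C, \bp_l \rangle_F \\
&\;\leq\; \frac{D^\alpha_{i^*j^*}}{C_{i^*j^*}}\, (\Bin-1)^p \Delta^p
\;=\; d_0\, D^\alpha_{i^*j^*}.
\end{align*}
Since the right-hand side is independent of $\bp$, the same bound holds at the optimum, which is the claim.

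The only real obstacle is the diagonal bookkeeping above — the ratio $D^\alpha_{ij}/C_{ij}$ is a formal $0/0$ on the diagonal, and one has to verify $D^\alpha_{ii} = 0$ to legitimately restrict the $\argmax$; everything else is one line of algebra. It is also worth noting that the per-$l$ simplex constraints $\sum_{i,j}(\bp_l)_{ij} \leq 1$ are never invoked, which is precisely why the bound is \emph{greedy}: it would be achieved by pouring the entire knapsack budget $(\Bin-1)^p \Delta^p$ onto the single pair $(i^*, j^*)$, and is loose whenever the LP optimum is forced by the per-coordinate caps to spread mass across multiple $(i,j)$ entries.
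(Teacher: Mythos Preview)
Your proof is correct and follows essentially the same fractional-knapsack argument as the paper: bound $D^\alpha$ entrywise by $(D^\alpha_{i^*j^*}/C_{i^*j^*})\,C$ and then apply the single budget constraint. If anything, you are more careful than the paper, which writes the step as $D^\alpha = C \odot (D^\alpha \odot C^{\odot -1})$ without addressing the diagonal $0/0$; your observation that $D^\alpha_{ii}=0$ via row-stochasticity is exactly what makes that step legitimate.
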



To summarize, for composition with RDP accounting at order $\alpha$, we can either solve the LP relaxation in \eqref{eq:comb_opt_lp} or compute the greedy solution to obtain an upper bound for $D_\alpha(P || Q)$, and then apply the usual composition for RDP.

\begin{figure*}[t!]
\centering
\includegraphics[width=\linewidth]{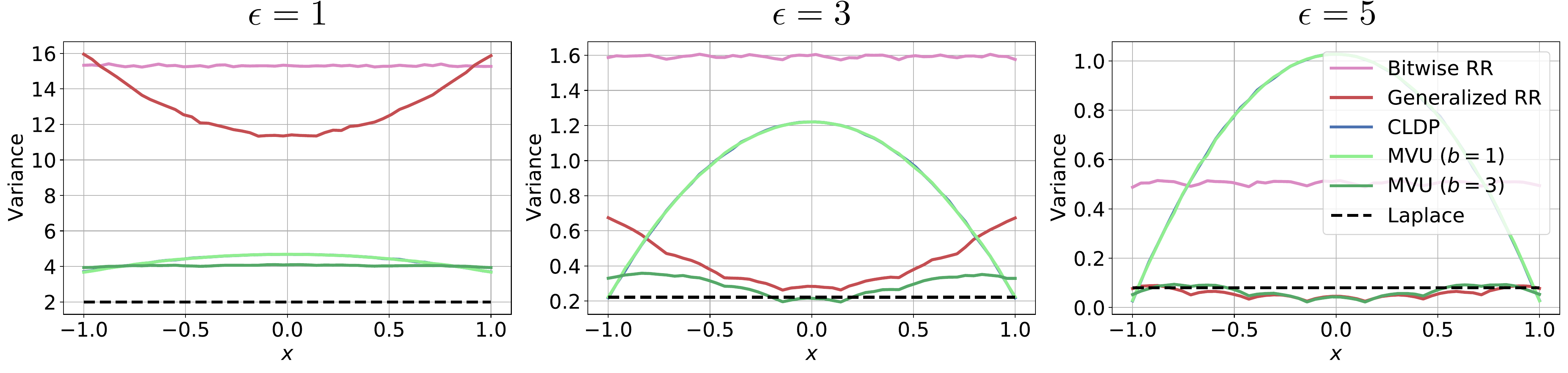}
\caption{Distributed mean estimation for scalar data with LDP $\epsilon=1,3,5$. The MVU mechanism with budget $b=1$ recovers the CLDP mechanism and the two curves coincide, while with $b=3$ MVU attains a low variance across all input values compared to the baseline mechanisms. See text for details.
}
\label{fig:dme_scalar_comparison}
\end{figure*}

\section{Experiments}
\label{sec:experiments}


We evaluate the MVU mechanism on two sets of experiments: Distributed mean estimation and federated learning. Our goal is to demonstrate that MVU can attain a better privacy-utility trade-off at low communication budgets compared to other private compression mechanisms.
Code to reproduce our results can be found in the repo \url{https://github.com/facebookresearch/dp_compression}.

\subsection{Distributed mean estimation}
\label{sec:dme}

In distributed mean estimation (DME), a set of $n$ clients each holds a private vector $\bx_i \in \mathbb{R}^d$, and the server would like to privately estimate the mean $\bar{\bx} = \frac{1}{n} \sum_{i=1}^n \bx_i$.

\paragraph{Scalar DME.} We first consider the setting of scalar data, \emph{i.e.}, $d=1$. For a fixed value $x \in [-1,1]$, we set $\bx_i = x$ for all $i=1,\ldots,n$ with $n=100,000$ and then privatize them before taking average. We measure the squared difference between the private estimate and $\bar{\bx} = x$, which is coincidentally the variance of the mechanism at $x$. The baseline mechanisms that we evaluate against are (unbiased) Bitwise Randomized Response (bRR), (unbiased) Generalized Randomized Response (gRR), the communication-limited local differentially private (CLDP) mechanism~\citep{girgis2021shuffled}, and the Laplace mechanism without any compression. The CLDP mechanism uses a fixed communication budget of $b=1$, whereas for bRR and gRR we set $b=3$, and for MVU we set $b=1,3$. 

Figure \ref{fig:dme_scalar_comparison} shows the plot of input value $x$ vs. variance of the private mechanism at $x$.
Interestingly, MVU with $b=1$ recovers the CLDP mechanism for $\epsilon=1,3,5$, while MVU with $b=3$ is consistently the lowest variance private compression mechanism. For larger $\epsilon$, it is evident that the variance of both gRR and MVU are comparable or even slightly lower that of the Laplace mechanism, even when compressing to only $b=3$ bits in their output.

\begin{figure*}[t]
    \centering
    \begin{subfigure}{.49\textwidth}
      \centering
      \includegraphics[width=\linewidth]{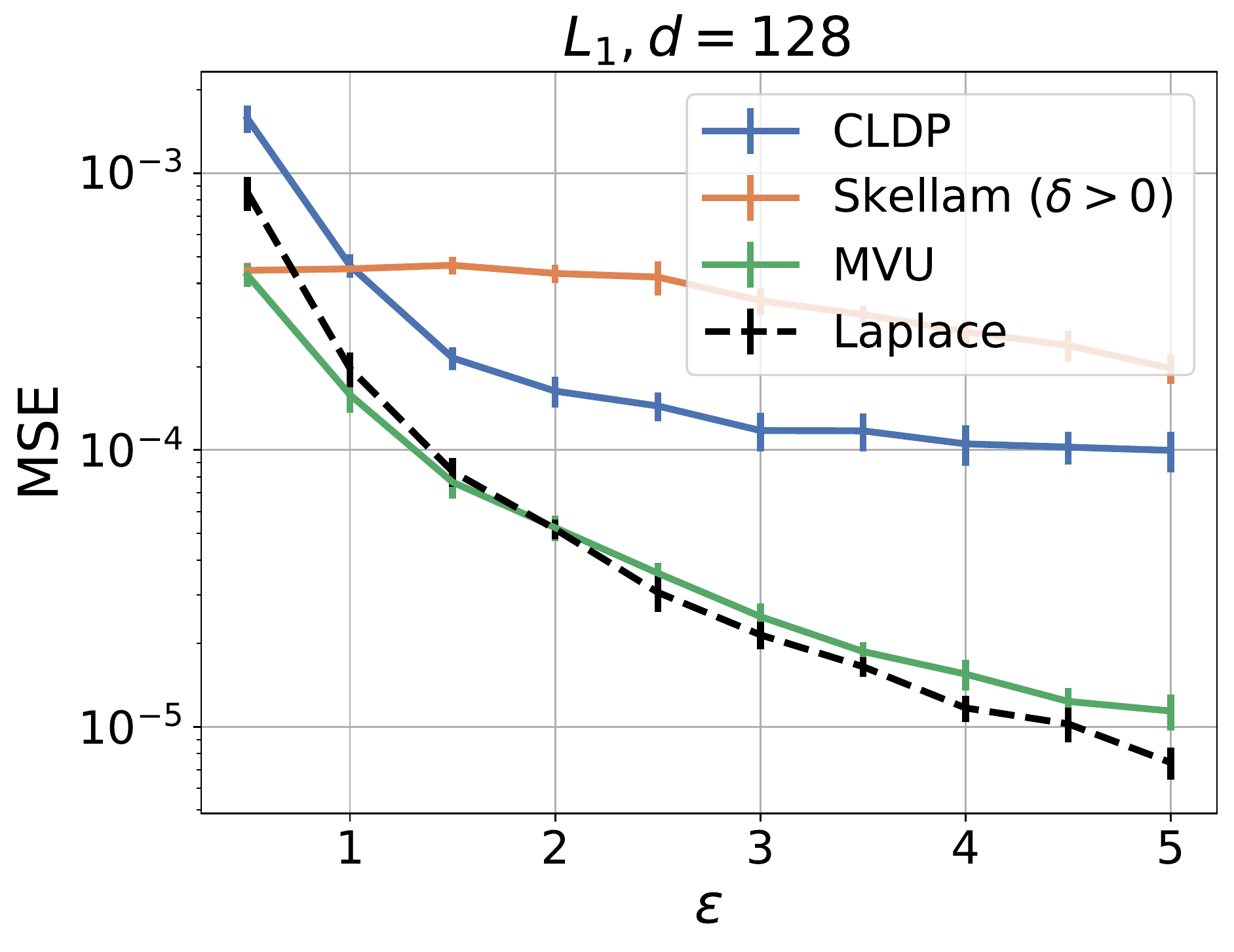}
    \end{subfigure}
    \begin{subfigure}{.49\textwidth}
      \centering
      \includegraphics[width=\linewidth]{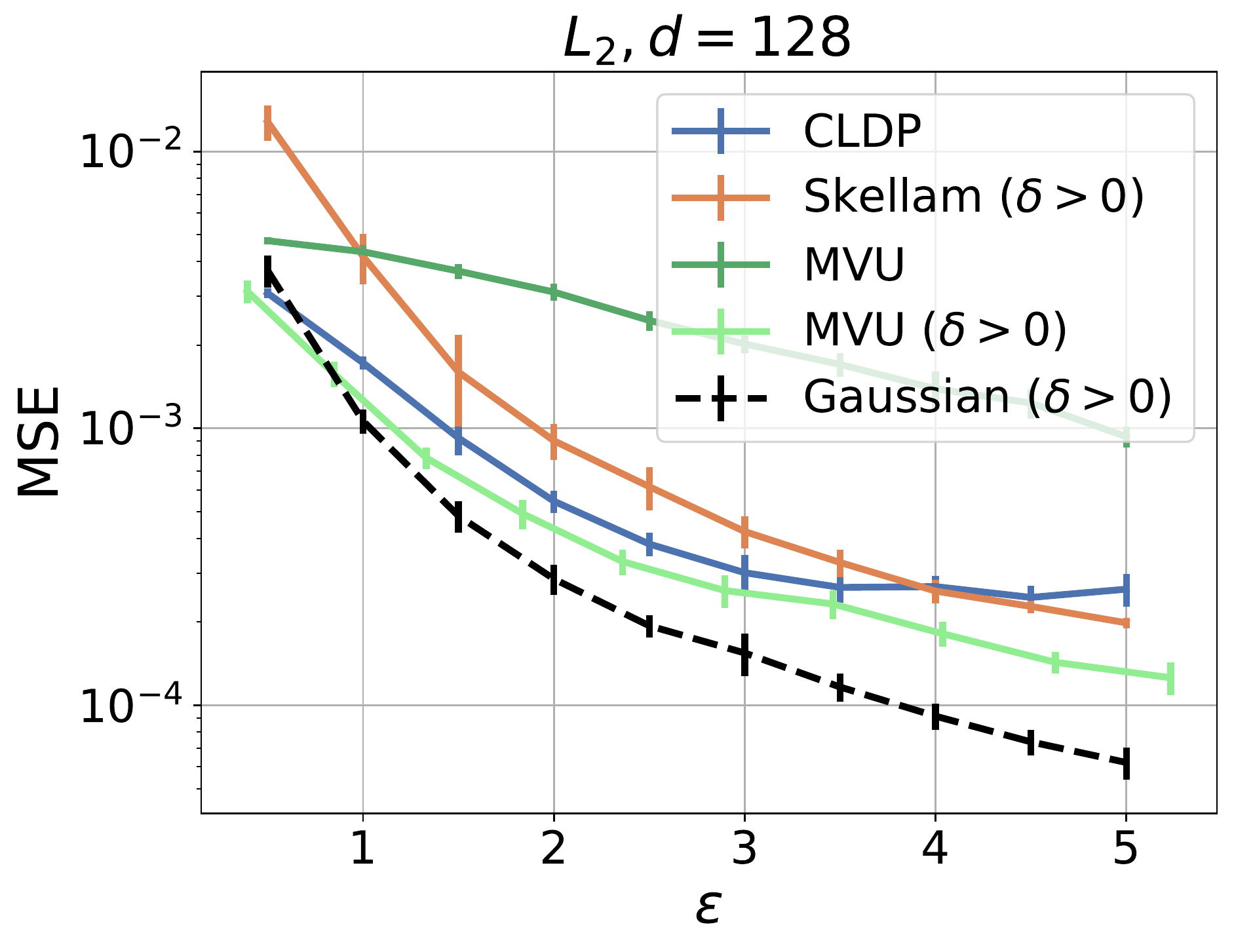}
      \end{subfigure}
    \caption{Distributed mean estimation for $n=10,000$ data vectors with $L_1$- (left) and $L_2$-sensitivity (right). Error bars represent standard deviation across $10$ repeated runs with different private vectors. Methods that are $(\epsilon,\delta)$-DP use the same value of $\delta=1/(n+1)$. The MVU mechanism can attain an MSE close to that of the Laplace and Gaussian mechanisms while compressing the output to only $b=3$ bits per coordinate. }
    \label{fig:dme_comparison}
\end{figure*}

\paragraph{Vector DME.} We next look at vector data with $d=128$ and $n=10,000$. We draw the sensitive vectors from two distinct distributions\footnote{We intentionally avoided zero-mean distributions since some of the private mechanisms converge to the all-zero vector as $\epsilon \rightarrow 0$.}: (i) Uniform at random from $[0,1]^d$ and then normalize to $L_1$-norm of 1; and (ii) Uniform over the spherical sector $\mathbb{S}^{d-1} \cap \mathbb{R}_{\geq 0}^d$. In these settings, the vectors $\bx_i$ have $L_1$- and $L_2$-sensitivity of 1, respectively.

For baselines, we consider the CLDP mechanism~\citep{girgis2021shuffled}, the Skellam mechanism~\citep{agarwal2021skellam}, the Laplace mechanism (for setting (i) only), and the Gaussian mechanism (for setting (ii) only). Both the Skellam and the Gaussian mechanisms are $(\epsilon,\delta)$-DP for $\delta > 0$. For a given $\epsilon>0$, we set $\delta = 1/(n+1)$ and choose the noise parameter $\mu$ for the Skellam mechanism using the optimal RDP conversion, and the noise parameter $\sigma$ for the Gaussian mechanism using the analytical conversion in \citet{balle2018improving}.
For communication budget, we set $b=3$ for MVU and $b=16$ for Skellam (which requires a large $b$ in order to prevent truncation error). The CLDP mechanism does not allow flexible selection of communication budget, and instead outputs a \emph{total} number of $\log_2(d) + 1$ bits for the $L_1$-sensitivity setting, and $b = \log_2(d) + 1 = 8$ bits \emph{per coordinate} for the $L_2$-sensitivity setting. See Appendix \ref{sec:experiment_details} for a more detailed explanation.

Figure \ref{fig:dme_comparison} shows the mean squared error (MSE) for privately estimating $\bar{\bx}$ across different values of $\epsilon$. In the left plot corresponding to the $L_1$-sensitivity setting, MVU can attain MSE close to the Laplace mechanism at a greatly reduced $b=3$ bits per coordinate. In comparison, CLDP and Skellam attain MSE that is more than an order of magnitude higher than Laplace.

The right plot corresponds to $L_2$-sensitivity. Here, the MVU mechanism (dark green line) is significantly less competitive than the baselines. This is because the $L_2$-metric DP constraint for the MVU mechanism forces rows of the sampling probability matrix $P$ to be near-identical, hence is near-singular and does not admit a well-conditioned unbiased solution.
To address this problem, we instead optimize the MVU mechanism to satisfy $L_1$-metric DP and use the R\'{e}nyi accounting in Section \ref{sec:composition} to compute its RDP guarantee, then apply RDP-to-DP conversion to give an $(\epsilon,\delta)$-DP guarantee at $\delta=\frac{1}{n+1}$. The light green line shows the performance of the $L_1$-metric DP mechanism, which now slightly outperforms both CLDP and Skellam at a much lower communication budget of $b=3$. These results demonstrate that the MVU mechanism attains better utility vs. compression trade-off for vector data as well.

\begin{figure*}[t]
    \centering
    \begin{subfigure}{.49\textwidth}
      \centering
      \includegraphics[width=\linewidth]{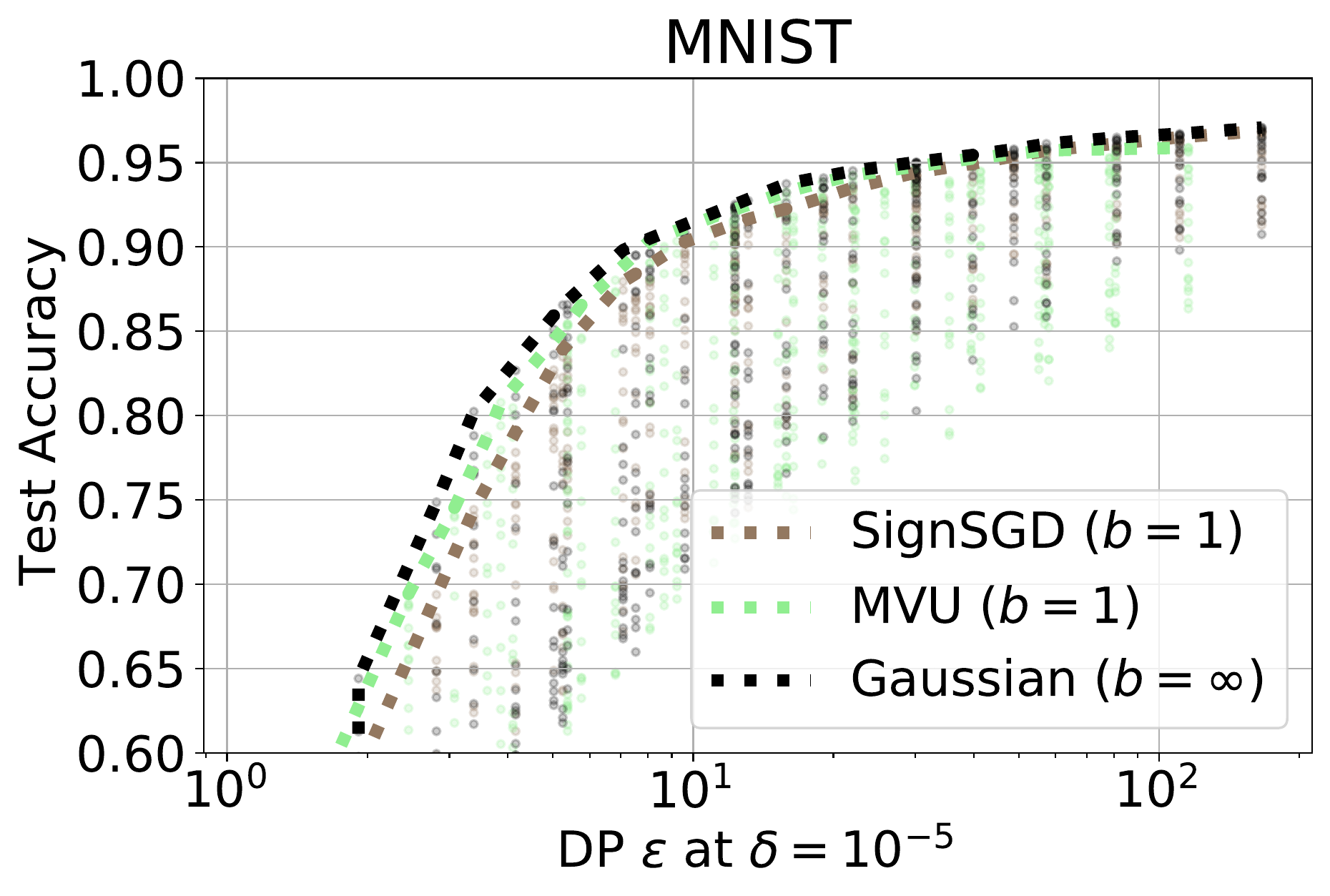}
    \end{subfigure}
    \begin{subfigure}{.49\textwidth}
      \centering
      \includegraphics[width=\linewidth]{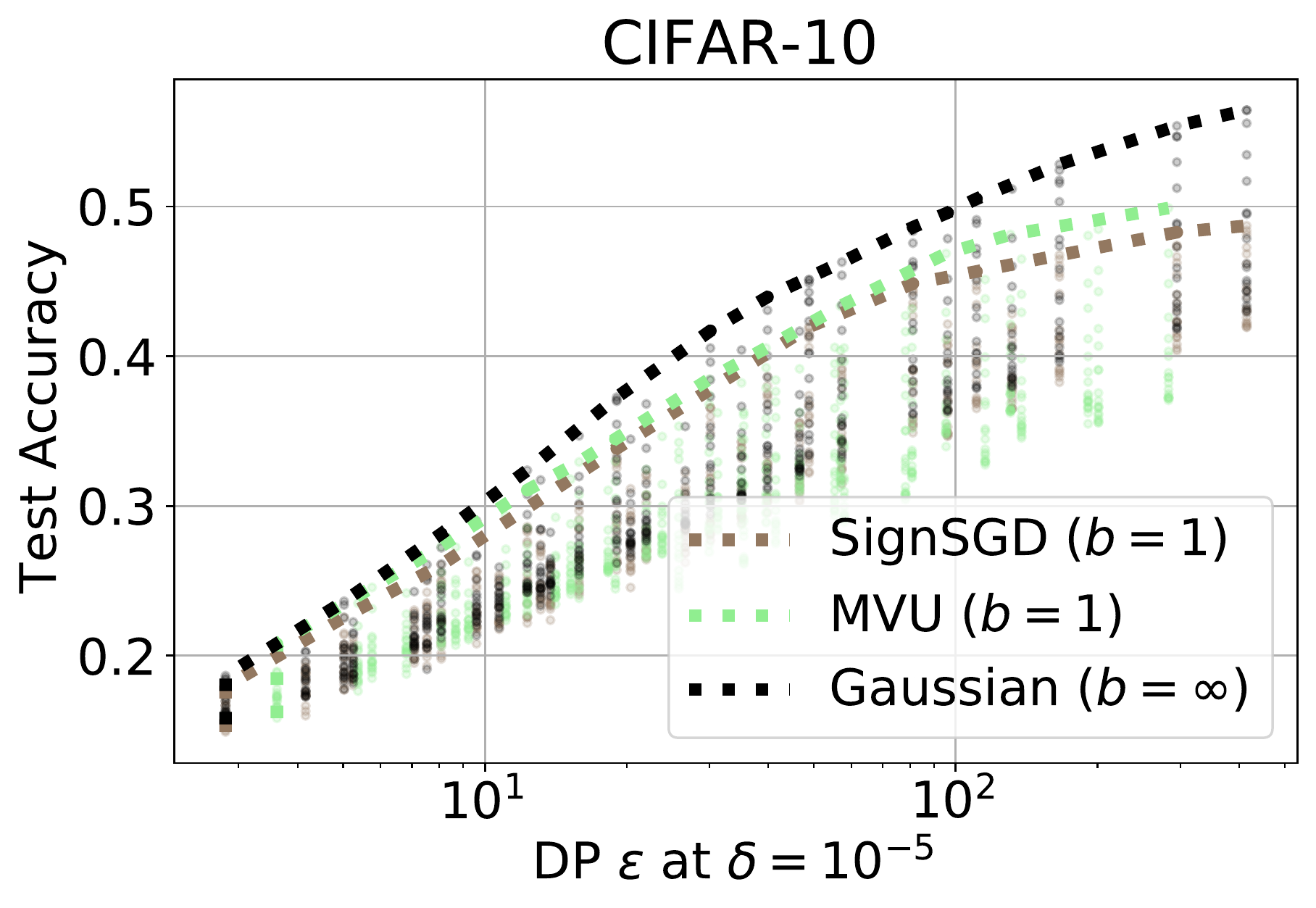}
      \end{subfigure}
\caption{DP-SGD training with Gaussian mechanism, stochastic signSGD and MVU mechanism on MNIST (left) and CIFAR-10 (right). Each point corresponds to a single hyperparameter setting, and dashed line shows Pareto frontier of privacy-utility trade-off. MVU mechanism outperforms signSGD at the same communication budget of $b=1$.}
\label{fig:dpsgd_comparison}
\end{figure*}

\subsection{Private SGD}
\label{sec:fl}

Federated learning~\citep{mcmahan2017communication} often employs DP to protect the privacy of the clients' updates. We next evaluate the MVU mechanism for this use case and show that it can serve as a drop-in replacement for the Gaussian mechanism for FL protocols, providing similar DP guarantees for the client update while reducing communication.

In detail, for MNIST and CIFAR-10~\citep{krizhevsky2009learning}, we train a linear classifier on top of features extracted by a scattering network~\citep{oyallon2015deep} similar to the one used in \citet{tramer2020differentially}; see Appendix \ref{sec:experiment_details} for details.
The base private learning algorithm is DP-SGD with Gaussian gradient perturbation~\citep{abadi2016deep} and R\'{e}nyi-DP accounting. The private compression baselines are the MVU mechanism with budget $b=1$ and stochastic signSGD~\citep{jin2020stochastic} -- a specialized private gradient compression scheme for federated SGD that applies the Gaussian mechanism and outputs its coordinate-wise sign. Similar to the distributed mean estimation experiment with $L_2$-sensitivity, we optimize the MVU mechanism to satisfy $L_1$-metric DP and then compute its R\'{e}nyi privacy guarantee as in Section \ref{sec:composition}.

Figure \ref{fig:dpsgd_comparison} shows the privacy-utility trade-off curves. We sweep over a grid of hyperparameters (see Appendix \ref{sec:experiment_details} for details) for each mechanism and plot the resulting $\epsilon$ and test accuracy as a point in the scatter plot. The dashed line is the Pareto frontier of optimal privacy-utility trade-off. The result shows that MVU mechanism outperforms signSGD---a specially-designed gradient compression mechanism for federated learning---at nearly all privacy budgets with the same communication cost of \emph{one bit per coordinate}. We include an additional result for a small convolutional network in Appendix \ref{sec:experiment_details}, where we observe similar findings.

\section{Related Work}
\label{sec:relwork}



Federated data analysis with local DP is now a standard solution for analyzing sensitive data held by many user devices. A body of work~\citep{erlingsson2014rappor, kairouz2016discrete, acharya2019hadamard} provides methods for analytics over categorical data. The main methods here are Randomized Response~\citep{warner1965randomized}, RAPPOR~\citep{erlingsson2014rappor} and the Hadamard Mechanism~\citep{acharya2019hadamard}. \citet{chen2020breaking} shows that the Hadamard Mechanism uses near-optimal communication for categorical data. 


In work on federated statistics or learning for real-valued data, \cite{Cormode2021bit} provides asymptotically consistent algorithms for transmitting scalars.  
They propose to first sample one or a subset of indices of bits in the fixed-point representation of the input, and then apply randomized response independently to each of these bits. \cite{girgis2020shuffled} provides mechanisms for distributed mean estimation from vectors inside unit $L_p$ balls. Unlike our method, which provides a near-optimal solution under any given communication budget, their methods use specific communication budgets and are not readily generalizable to any budget $b$. Finally, \citet{Amiri2021compressive} propose to obtain a quantized DP mechanism by composing subtractive dithering with the Gaussian mechanism, and doing privacy accounting that factors in both. In contrast, we simply use (non-subtractive) dithering to initially obtain a fixed-point representation, and then design a mechanism to  quantize and provide DP.



A large body of work focuses on federated optimization methods with compressed communication~\citep{konevcny2016federated,horvath2019stochastic,das2020faster,haddadpour2021federated,gorbunov21marina}. While most propose biased compression methods (e.g., top-$k$ sparsification), such approaches require the use of error feedback to avoid compounding errors~\citep{seide2014one,stich2020error}. However, error feedback is inherently incompatible with DP~\citep{jin2020stochastic}, unlike our MVU mechanism. 
\section{Conclusion and Limitations}

We introduce the MVU framework to jointly design scalar compression and DP mechanisms, and extend it to the vector and metric-DP settings. We show that the MVU mechanism attains a better utility-compression trade-off for both scalar and vector mean estimation compared to other approaches in the literature. Our work shows that co-designing the compression and privacy-preserving components can lead to more efficient differentially private mechanisms for federated data analysis.

\paragraph{Limitations.} Our work presents several opportunities for further improvement. 1. For vector dithering, Appendix \ref{sec:experiment_details} shows that the input vector's norm can increase by a small additive factor. Our current solution of conditional random dithering introduces a small but non-negligible bias. Future work on unbiased norm-preserving vector dithering may be able to alleviate this issue. 2. Optimizing the MVU mechanism for large values of the input/output bit width $\bin$ and $\bout$ can be prohibitively expensive, even with the alternating minimization heuristic. In order to scale the solution to higher-dimensional vectors, further effort in designing more efficient solutions for the MVU mechanism may be needed. 3. While our work focuses on local differential privacy, it may be possible to combine our approach with secure aggregation protocols to derive central differential privacy guarantees. However, since the MVU mechanism is not additive, further analysis is required to characterize the distribution of the aggregate for our mechanism, which we leave for future work.





\section*{Acknowledgements}
We thank Graham Cormode, Huanyu Zhang, and anonymous reviewers for insightful comments and suggestions that helped shape our final draft.

\bibliography{mike_refs,privacy}

\newpage
\appendix
\onecolumn
\section{Proofs}
\label{sec:proofs}

\begin{proof}(Of Lemma~\ref{lem:unbiased})
Observe that in this case:
\begin{align*}
\mathbb{E}(\calA_n(\calM(x_1), \ldots, \calM(x_n)) &= \frac{1}{n} \sum_{i=1}^{n} \mathbb{E} (\calM(x_i)) \\
&= \frac{1}{n} \sum_{i=1}^{n} x_i = \calT(x_1, \ldots, x_n).
\end{align*}
Additionally,
\begin{equation*}
\mathbb{E}\left[ \left(\calA_n(\calM(x_1), \ldots, \calM(x_n)) - \frac{1}{n} \sum_i x_i \right)^2 \right] = \frac{1}{n} \sum_i \mathbb{E}( \calM(x_i) - x_i)^2.\end{equation*}
If $\calM$ has bounded variance, then the variance of $\calA_n(\calM(x_1), \ldots, \calM(x_n))$ diminishes with $n$. The rest of the lemma follows by an application of the Chebyshev's inequality. 
\end{proof}

\begin{proof}(Of Lemma~\ref{lem:metrictovector})
The proof generalizes the argument that the Laplace mechanism applied independently to each coordinate is differentially private for vectors with bounded $L_1$-sensitivity. Let $\bx \in \mathbb{R}^d$ with $\| \bx \|_p \leq \Delta$, and let $Q_0, Q_1$ be density functions for the output distributions of $\calM$ with or without the input $\bx$. Then for any output value $\bz$:
\begin{align*}
    \frac{Q_0(\mathbf{z})}{Q_1(\mathbf{z})} &= \prod_{i=1}^d \frac{Q_0(\mathbf{z}_i)}{Q_1(\mathbf{z}_i)} \\
    &\leq \prod_{i=1}^d \exp(\epsilon \bx_i^p) \quad \text{since $\calM$ is $\epsilon$-metric DP w.r.t. $d(y,y') = |y-y'|^p$}\\
    &= \exp \left( \epsilon \sum_{i=1}^d \bx_i^p \right) \leq \exp(\epsilon \Delta^p).
\end{align*}
The reverse inequality can be derived similarly. Unbiasedness follows from the fact that $\calM$ is unbiased for each dimension $i=1,\ldots,d$.
\end{proof}

\begin{proof}(Of Lemma~\ref{lem:greedy})
Let $\bp$ be a feasible solution for \eqref{eq:comb_opt_lp}. Let $\odot$ and $^{\odot -1}$ denote element-wise product and inverse, respectively. Then:
\begin{align*}
    \sum_{l=1}^d \langle D^\alpha, \bp_l \rangle_F &= \sum_{l=1}^d \langle C \odot (D^\alpha \odot C^{\odot -1}), \bp_l \rangle_F \\
    &\leq \sum_{l=1}^d \langle (D_{i^* j^*}^\alpha / C_{i^* j^*}) C , \bp_l \rangle_F \\
    &\leq (D_{i^* j^*}^\alpha / C_{i^* j^*}) (B-1)^p \Delta^p \\
    &= d_0 D_{i^* j^*}^\alpha.
\end{align*}
\end{proof}

\begin{theorem}\label{thm:umrr}
Unbiased Bitwise Randomized Response satisfies $\epsilon$-local DP and is unbiased.
\end{theorem}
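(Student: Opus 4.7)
The plan is to establish the two claims independently, leveraging the factorization $\calM(x) = \tilde{\calM}(\Dither(x))$, where $\tilde{\calM} : \{0, 1/(B-1), \ldots, 1\} \to A^b$ applies one-bit randomized response with privacy budget $\epsilon/b$ to each of the $b$ bits of the grid index $(B-1)z$ and relabels each outcome via the alphabet $A = \{-1/(e^{\epsilon/b}-1),\, e^{\epsilon/b}/(e^{\epsilon/b}-1)\}$. This factorization cleanly separates the argument into three parts: (i) an LDP analysis for $\tilde{\calM}$ on the discrete grid via basic composition, (ii) a randomized pre-processing step to push LDP through $\Dither$, and (iii) a bit-wise unbiasedness calculation.

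For (i), the one-bit RR with flip probability $p = e^{\epsilon/b}/(1 + e^{\epsilon/b})$ is $(\epsilon/b)$-LDP on $\{0,1\}$ by construction, and the deterministic relabeling into $A$ is post-processing, which preserves LDP. For any two grid points $z, z'$ with bit expansions of $(B-1)z, (B-1)z'$ and any tuple $t \in A^b$, the density ratio $\Pr[\tilde{\calM}(z) = t]/\Pr[\tilde{\calM}(z') = t]$ factorizes across the $b$ independent bit-RR calls with each factor bounded by $e^{\epsilon/b}$, yielding $\tilde{\calM}$ as $\epsilon$-LDP. For (ii), fix an output $t$, set $R_z(t) = \Pr[\tilde{\calM}(z) = t]$, and exploit the \emph{uniform} pointwise bound $R_z(t) \leq e^\epsilon R_{z'}(t)$ over all grid points. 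Then $\Pr[\calM(x) = t] = \bbE_{z \sim \Dither(x)}[R_z(t)] \leq e^\epsilon R_{z'}(t)$ for any fixed $z'$; averaging the right side over $z' \sim \Dither(x')$ gives $\Pr[\calM(x) = t] \leq e^\epsilon \Pr[\calM(x') = t]$, as required.

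For (iii), the alphabet values are designed so that the per-bit output satisfies $\bbE[t_j \mid z_j] = p\, a_{z_j} + (1-p)\, a_{1-z_j} = z_j$, as a direct substitution verifies for both $z_j = 0$ and $z_j = 1$. With the natural linear decoding $\hat{z} = \frac{1}{B-1}\sum_{j=1}^b 2^{b-j} t_j$ and independence across bits, $\bbE[\hat{z} \mid z] = \frac{1}{B-1}\sum_{j=1}^b 2^{b-j} z_j = z$, since the $z_j$ are precisely the binary digits of the integer $(B-1)z \in \{0, \ldots, B-1\}$. Combined with the unbiasedness of dithering, $\bbE[\Dither(x)] = x$, the tower property yields $\bbE[\hat{z} \mid x] = x$.

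The main obstacle is the pre-processing step in the privacy analysis: because the support of $\Dither(x)$ depends on $x$, dithering cannot be treated as a separate DP step with its own privacy budget, and the standard sequential-composition theorem does not apply directly. The workaround relies on the fact that $\tilde{\calM}$ satisfies a \emph{uniform} $\epsilon$-LDP bound over all pairs of grid points, not merely those that arise in the dithering support of a given input; this is what lets the pointwise inequality survive under the expectation that defines $\Pr[\calM(x) = t]$, and is the crux of the double-expectation argument above.
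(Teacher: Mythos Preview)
Your proof is correct and follows essentially the same approach as the paper: per-bit $(\epsilon/b)$-LDP plus composition for privacy, and per-bit unbiasedness combined with linear decoding and the dithering guarantee for the unbiasedness claim. If anything, you are more careful than the paper in explicitly handling the randomized dithering pre-processing in the privacy argument via your double-expectation step; the paper simply invokes composition in one sentence without addressing why the data-dependent dithering does not break the LDP bound.
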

\begin{proof}
By standard proofs of the Randomized Response mechanism, transmitting bit $j$ of $z$ is $\epsilon/b$-differentially private. The entire procedure is thus $\epsilon$-differentially private by composition.

	To show unbiasedness, first we observe that $\bbE[z] = x$. Additionally, let us write $z = \sum_{j=0}^{b-1} 2^{-j} z_j$. The transmitted number $t$ is decoded as $t = \sum_{j=0}^{b-1} 2^{-j} t_j$. Thus, if $\bbE[t_j] = z_j$, then the entire algorithm is unbiased. Observe that:
	\begin{align*}
		\bbE[t_j] &  = a_0 + (a_1 - a_0) \bbE[y_j] \\
		&= a_0 + (a_1 - a_0) \left( z_j \cdot \frac{1}{1 + e^{-\epsilon/b}} + (1 - z_j) \cdot \frac{e^{-\epsilon/b}}{1 + e^{-\epsilon/b}} \right) \\
		&= z_j.
	\end{align*}
	where the last step follows from some algebra. The theorem follows by noting that $\bbE[z] = x$ from properties of dithering.
\end{proof}

\begin{theorem}\label{thm:rappor}
Unbiased Generalized Randomized Response satisfies $\epsilon$-local DP and is unbiased.
\end{theorem}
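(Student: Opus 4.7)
(Proposal for Theorem~\ref{thm:rappor})
The plan is to verify the two claims separately: local DP follows directly from the structure of the sampling probability matrix $P$, and unbiasedness follows from first showing the defining linear system has a solution, and then combining the unbiasedness of dithering with that of the inner discrete mechanism.

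\textbf{Local DP.} By construction, $P = \tfrac{e^\epsilon-1}{\Bout + e^\epsilon - 1}\, I_{\Bout} + \tfrac{1}{\Bout+e^\epsilon-1}\, J$, where $J$ is the all-ones matrix. Hence $p_{i,j} = \tfrac{e^\epsilon}{\Bout+e^\epsilon-1}$ when $i=j$ and $p_{i,j} = \tfrac{1}{\Bout+e^\epsilon-1}$ when $i\neq j$. The ratio $p_{i,j}/p_{i',j}$ therefore takes only the values $1$, $e^\epsilon$, or $e^{-\epsilon}$, which verifies Condition~\eqref{eq:dp-constraints}. Because dithering is a data-independent randomized rounding, composing it with the discrete sampling step preserves the $\epsilon$-LDP guarantee by post-processing.

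\textbf{Unbiasedness.} The first step is to show that the linear system~\eqref{eqn:airappor} has a unique solution. Rewriting it as $(e^\epsilon-1)\, a_i + S = \tfrac{i(\Bout+e^\epsilon-1)}{\Bout-1}$ with $S = \sum_j a_j$, the coefficient matrix is $(e^\epsilon-1) I_{\Bout} + \mathbf{1}\mathbf{1}^\top$, a rank-one perturbation of a scaled identity whose invertibility (for $\epsilon>0$) is immediate from the matrix determinant lemma. Summing over $i$ pins down $S = \Bout/2$, and substituting back yields the closed-form $a_i = \bigl(\tfrac{i(\Bout+e^\epsilon-1)}{\Bout-1} - \tfrac{\Bout}{2}\bigr)/(e^\epsilon-1)$. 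With this alphabet, Condition~\eqref{eq:unbiased} holds by definition, so for every grid point $z = i/(\Bout-1)$ we have $\mathbb{E}[\calM(z) \mid z] = z$.

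To finish, I invoke the law of total expectation together with the unbiasedness of dithering recalled in Section~2.4: $\mathbb{E}[\Dither(x)] = x$, so
\[
\mathbb{E}[\calM(x)] = \mathbb{E}\!\left[\mathbb{E}[\calM(\Dither(x)) \mid \Dither(x)]\right] = \mathbb{E}[\Dither(x)] = x.
\]

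The only mildly non-routine part is verifying that the linear system~\eqref{eqn:airappor} is actually solvable; this is the one step that could fail in principle, but the rank-one structure of the coefficient matrix makes it transparent. Everything else reduces to the row-structure of $P$ and the tower property of conditional expectation.
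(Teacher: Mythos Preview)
Your approach mirrors the paper's: verify the $e^\epsilon$ ratio on the rows of $P$, and combine the unbiasedness of the inner discrete mechanism with that of dithering via the tower property. You go further than the paper by explicitly checking that the linear system for the $a_i$ is solvable and giving the closed form; the paper simply assumes a solution exists.

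One slip in the privacy half: dithering is applied \emph{before} the discrete mechanism and its output distribution depends on $x$, so it is neither ``data-independent'' nor an instance of post-processing. The correct (and equally short) justification is that LDP is closed under arbitrary randomized \emph{pre}-processing: for any $x\in[0,1]$ and output $j$,
\[
\Pr(\calM(x)=j)=\sum_{i}\Pr\bigl(\Dither(x)=\tfrac{i}{\Bout-1}\bigr)\,p_{i,j}
\]
is a convex combination of the column entries $\{p_{i,j}\}_i$, hence lies in $[\min_i p_{i,j},\max_i p_{i,j}]$. The ratio bound $e^\epsilon$ you already established between any two rows of $P$ therefore carries over to any two such mixtures, giving $\epsilon$-LDP on all of $[0,1]$. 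With that fix, the argument is complete and matches the paper's (terser) proof.
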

\begin{proof}
The proof of privacy follows from standard proofs of the privacy of the Generalized RR mechanism. 
To prove unbiasedness, observe that when $z = \frac{i}{B-1}$, the expected output is $a_i$ with probability $\frac{e^{\epsilon}}{B + e^{\epsilon} - 1}$ and $a_j$ for $j \neq i$ with probability $\frac{1}{B + e^{\epsilon} - 1}$. From Equation~\ref{eqn:airappor}, this expectation is also $\frac{i}{B-1} = z$. Additionally, from properties of the dithering process, $\bbE[z] = x$. The unbiasedness follows by combining these two.  
\end{proof}

\section{Experimental Details}
\label{sec:experiment_details}

\subsection{Vector dithering}

The optimization program in the MVU Mechanism operates on numbers on a discrete grid, which are obtained by dithering. In the scalar case, we use the standard dithering procedure on an $x$ in $[0, 1]$. For vectors, we use coordinate-wise dithering on each coordinate. While this leads to an unbiased solution, it might increase the norm of the vector. We show below that the increase in norm is not too high.

\begin{lemma} \label{lem:vector_dithering}
Let $v$ be a vector such that $\|v\| \leq 1$ and $v_i \in [-1, 1]$ for each coordinate $i$. Let $v'$ be the vector obtained by dithering each coordinate of $v$ to a grid of size $B$ (so that the difference between any two grid points is $\Delta = \frac{2}{B-1}$). Then, with probability $\geq 1 - \delta$,
\[ \|v'\|^2 \leq \|v\|^2 +  \sqrt{2} \|v\| \Delta \log(4/\delta) + d \Delta^2/4 + \sqrt{2 d} \Delta \log (4/\delta).   \]
\end{lemma}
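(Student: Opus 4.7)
The plan is to expand
\[
\|v'\|^2 = \|v\|^2 + 2\langle v, v'-v\rangle + \|v'-v\|^2
\]
and bound the two extra terms by separate concentration arguments before taking a union bound. Write $\eta_i := v'_i - v_i$ for the per-coordinate dithering noise. By the properties of non-subtractive dithering recalled in Section~2.4, the $\eta_i$ are independent across coordinates, $\bbE[\eta_i] = 0$, $|\eta_i| \leq \Delta$, and $\bbE[\eta_i^2] \leq \Delta^2/4$. In particular, Hoeffding's lemma makes each $\eta_i$ sub-Gaussian with proxy variance $\Delta^2/4$.

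For the cross term $2\langle v, v'-v\rangle = 2\sum_i v_i \eta_i$, the summands $v_i\eta_i$ are independent, mean-zero, and sub-Gaussian with proxy variance $v_i^2\Delta^2/4$, so the sum is sub-Gaussian with proxy variance $\|v\|^2\Delta^2/4$. A standard Chernoff bound then gives, with probability at least $1 - \delta/2$,
\[
|2\langle v, v'-v\rangle| \leq \sqrt{2}\,\|v\|\,\Delta\,\sqrt{\log(4/\delta)},
\]
which after the harmless relaxation $\sqrt{\log(4/\delta)} \leq \log(4/\delta)$ (valid once $\delta \leq 4/e$) yields the first extra term in the stated inequality.

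For the squared-noise term $\|v'-v\|^2 = \sum_i \eta_i^2$, the variables $\eta_i^2$ are independent and bounded in $[0,\Delta^2]$, with total mean $\sum_i \bbE[\eta_i^2] \leq d\Delta^2/4$. A second Hoeffding bound applied to the centered sum $\sum_i (\eta_i^2 - \bbE[\eta_i^2])$ gives, with probability at least $1 - \delta/2$, a deviation on the order of $\sqrt{d}\,\Delta^2\sqrt{\log(4/\delta)}$, and combining this with the mean produces the $d\Delta^2/4 + \sqrt{2d}\,\Delta\log(4/\delta)$ contribution (up to the precise form of the tail factor). A union bound over the two concentration events then gives the full claim with total failure probability $\delta$.

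Conceptually the proof is routine: two independent Hoeffding-style concentration inequalities glued together by the expansion of $\|v'\|^2$. The main obstacle is purely bookkeeping---carefully tracking the boundedness ranges of $\eta_i$, $v_i\eta_i$, and $\eta_i^2$, their sub-Gaussian proxy variances, and the conversion from $\sqrt{\log(1/\delta)}$ tails to the $\log(1/\delta)$ form used in the statement. The coordinate-wise boundedness assumption $v_i \in [-1,1]$ enters only through the Hoeffding range of $v_i\eta_i$, while the global $\|v\| \leq 1$ assumption is what lets the first concentration term scale with $\|v\|$ rather than with $\sqrt{d}$.
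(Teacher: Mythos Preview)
Your proposal is correct and follows essentially the same approach as the paper: expand $\|v'\|^2 - \|v\|^2$ into a linear cross term and a squared-noise term, bound each by a separate Hoeffding-type inequality, and combine via a union bound. The only cosmetic difference is that the paper centers its decomposition at the lower grid point $\lambda_i$ (writing $v_i = \lambda_i + a_i$ and $v'_i = \lambda_i + Z_i$) rather than at $v_i$ itself, so its cross term carries coefficients $\lambda_i$ instead of $v_i$; the resulting bounds are the same.
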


\begin{proof}
Let $\Delta = \frac{2}{B-1}$ be the difference between any two grid points. For a coordinate $i$, let $v_i = \lambda_i + a_i$ where $\lambda_i$ is the
closest grid point that is $\leq v_i$ and $a_i \geq 0$. We also let $v'_i = \lambda_i + Z_i$; observe that by the dithering algorithm, $Z_i \in \{0, \Delta\}$, with $\bbE[Z_i] = a_i$. Additionally, $Var(Z_i) \leq \frac{\Delta^2}{4}$.

Additionally, we observe that $\|v'_i\|^2 = \sum_{i} (\lambda_i + Z_i)^2 = \sum_i \lambda_i^2 + 2 \lambda_i Z_i + Z_i^2$.
By algebra, we get that:
\[ \|v'\|^2 - \|v\|^2 = \sum_{i}(Z_i^2 - a_i^2) + \sum_{i} 2 \lambda_i (Z_i - a_i) \]
We next bound these terms one by one.
To bound the second term, we observe that $\bbE[Z_i] = a_i$ and apply Hoeffding's inequality. This gives us:
\[ \Pr(\sum_i \lambda_i Z_i \geq \sum_i \lambda_i a_i + t) \leq 2 e^{-t^2 / 2 \sum_i \lambda_i^2 \Delta^2} \]
Plugging in $t = \sqrt{2 \sum_i \lambda_i^2} \Delta \log(4/\delta)$ makes the right hand side $\leq \delta/2$. To bound the first term, we again use a Hoeffding's inequality.
\[ \Pr(\sum_i Z_i^2 \geq \sum_i \bbE[Z_i^2] + t) \leq 2e^{-t^2/2d \Delta^2} \]
Plugging in $t = \sqrt{2 d} \Delta \log (4/\delta)$ makes the right hand side $\leq \delta/4$. Therefore, with probability $\geq 1 - \delta$,
\[ \|v'\|^2 \leq \|v\|^2 + \sqrt{2 \sum_i \lambda_i^2} \Delta \log(4/\delta) + \sum_i (\bbE[Z_i]^2 - a_i^2)  +  \sqrt{2 d} \Delta \log (4/\delta) \]
Observe that $\bbE[Z_i^2] - a_i^2 = Var(Z_i) \leq \Delta^2/4$; additionally, $\sum_i \lambda_i^2 \leq \|v\|^2$. Therefore, we get:
\[ \|v'\|^2 \leq \|v\|^2 + \sqrt{2} \|v\| \Delta \log(4/\delta) + d \Delta^2/4 + \sqrt{2 d} \Delta \log (4/\delta) \]

The lemma follows.
\end{proof}

In practice, given an \emph{a priori} norm bound $\| v \| \leq R$ for all input vectors $v$, we estimate a scaling factor $\gamma \in [0, 1]$ and apply dithering to the input $\gamma v$ so that $\| \Dither(\gamma v) \| \leq R$ with high probability. This can be done by choosing a confidence level $\delta > 0$ and solving for $\sup \{\gamma \in [0,1] : \| \Dither(\gamma v) \| \leq R \text{ w.p. } \geq 1-\delta\}$ via binary search. Since dithering is randomized, we can perform rejection sampling until the condition $\| \Dither(\gamma v) \| \leq R$ is met. Doing so incurs a small bias that is insignificant in practical applications. We leave the design of more sophisticated vector dithering techniques that simultaneously preserve unbiasedness and norm bound for future work.

\subsection{Connection between DP and compression}

We highlight an interesting effect on the required communication budget as a result of adding differentially private noise. Figure \ref{fig:p_samples_comparison} shows the optimized sampling probability matrix $P$ for the MVU mechanism with a fixed input quantization level $\bin=5$ and various values of $\bout$. As $\bout$ increases, the overall structure in the matrix $P$ remains nearly the same but becomes more refined. Moreover, in the bottom right plot, it is evident that the marginal benefit to MSE becomes lower as $\bout$ increases. This observation suggests that for a given $\epsilon$, having more communication budget is eventually not beneficial to aggregation accuracy since the amount of information in the data becomes obscured by the DP mechanism and hence requires fewer bits to communicate.

\begin{figure}
\centering
\includegraphics[width=0.8\linewidth]{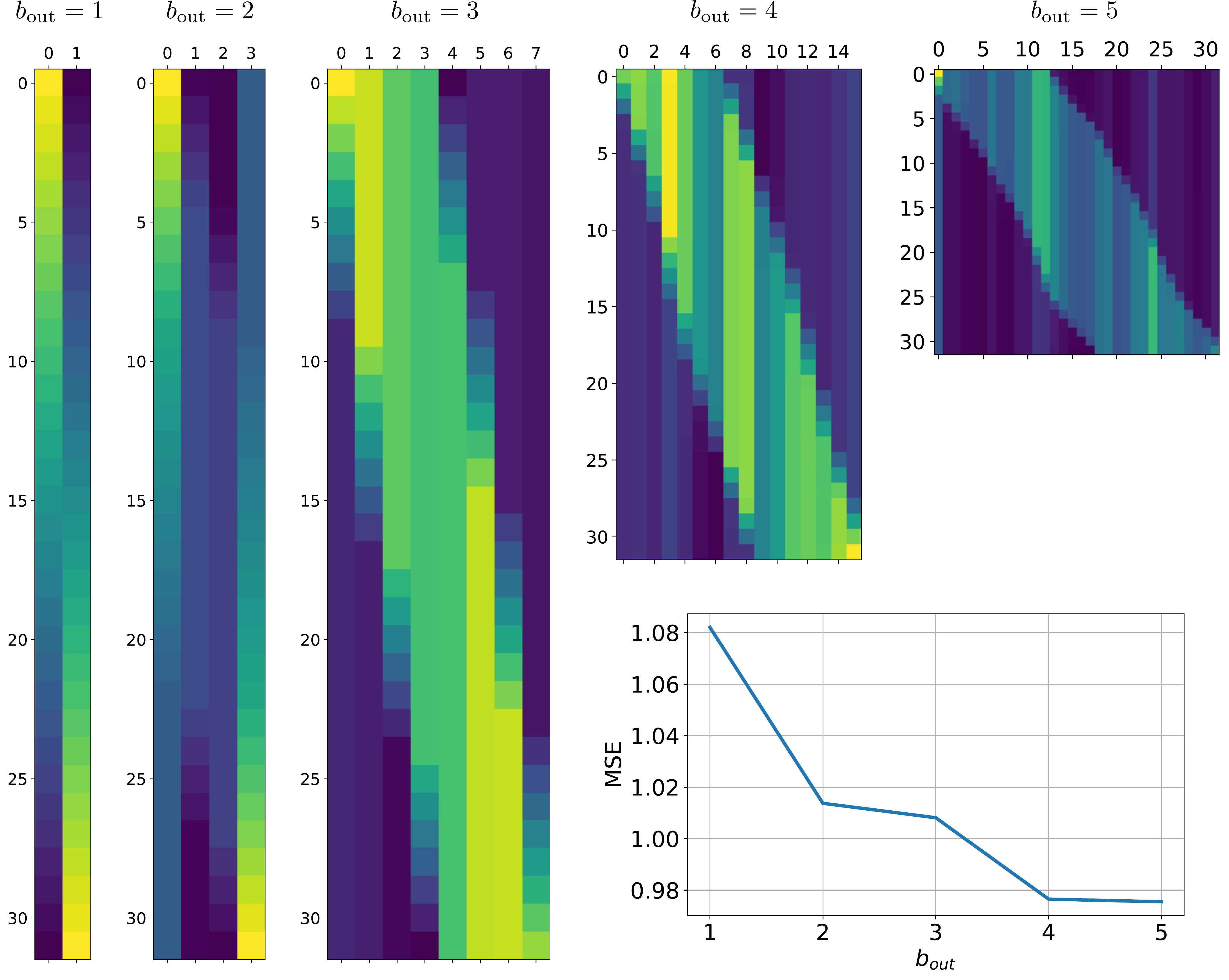}  
\caption{Optimized sampling probability matrix $P$ for the MVU mechanism with $\bin=5$ and different values of $\bout$. The bottom right plot shows that the marginal benefit of the communication budget $\bout$ to MSE becomes lower as $\bout$ increases.}
\label{fig:p_samples_comparison}
\end{figure}

\subsection{Distributed mean estimation}

For the vector distributed mean estimation experiment in Section \ref{sec:dme}, the different private compression mechanisms used different values of the communication budget $b$. We justify the choice of $b$ as follows.

\paragraph{$L_1$-sensitivity setting.} CLDP outputs a \emph{total} number of $\log_2(d) + 1 = 8$ bits, which is lower than that of both Skellam and MVU and cannot be tuned. Skellam performs truncation to the range $\{-2^{b-1}, 2^{b-1}-1\}$ after perturbing the quantized input with Skellam noise, and hence requires a value of $b$ that is large enough to prevent truncation error. We intentionally afforded Skellam a large budget of $b=16$ so that truncation error rarely occurs, and show that even in this setting MVU can outperform Skellam in terms of estimation MSE. For MVU, we chose $\bin=9$, which is the minimum value required to avoid a large quantization error, and $b=\bout=3$.

\paragraph{$L_2$-sensitivity setting.} CLDP uses a communication budget of $b=\log_2(d)+1=8$ \emph{per coordinate} and is not tunable. We used the same $b=16$ budget for Skellam as in the $L_1$-sensitivity setting. For MVU, we chose $\bin=5$ and $b=\bout=3$ for both the $L_1$- and $L_2$-metric DP versions, which results in a communication budget that is lower than both CLDP and Skellam. For the $L_1$-metric DP version, we found that optimizing MVU to satisfy $(\epsilon/2)$-metric DP with respect to the $L_1$ metric results in an $(\epsilon',\delta)$-DP mechanism with $\epsilon' \approx \epsilon$ and $\delta=1/(n+1)$ after optimal RDP conversion.

\subsection{Private SGD}

In Section \ref{sec:fl}, we trained a linear model on top of features extracted by a scattering network\footnote{We used the Kymatio library \url{https://github.com/kymatio/kymatio} to implement the scattering transform.} on the MNIST dataset. In addition, we consider a convolutional network with $\tanh$ activation, which has been found to be more suitable for DP-SGD~\citep{papernot2020tempered}. We give the architecture details of both models in Tables \ref{tab:scatter_linear} and \ref{tab:convnet}.

\begin{table}[h!]
    \begin{minipage}{.5\linewidth}
      \centering
        \resizebox{\textwidth}{!}{
        \begin{tabular}{ll}
            \toprule
            \textbf{Layer} & \textbf{Parameters} \\
            \midrule
            ScatterNet & Scale $J=2$, $L=8$ angles, depth 2 \\
            GroupNorm~\citep{wu2018group} & 6 groups of 24 channels each \\
            Fully connected & 10 units \\
            \bottomrule
        \end{tabular}
        }
        \caption{Architecture for scatter + linear model.}
        \label{tab:scatter_linear}
    \end{minipage}%
    \begin{minipage}{.5\linewidth}
      \centering
        \resizebox{\textwidth}{!}{
        \begin{tabular}{ll}
            \toprule
            \textbf{Layer} & \textbf{Parameters} \\
            \midrule
            Convolution $+\tanh$ & 16 filters of $8 \times 8$, stride 2, padding 2 \\
            Average pooling & $2 \times 2$, stride 1 \\
            Convolution $+\tanh$ & 32 filters of $4 \times 4$, stride 2, padding 0 \\
            Average pooling & $2 \times 2$, stride 1 \\
            Fully connected $+\tanh$ & 32 units \\
            Fully connected $+\tanh$ & 10 units \\
            \bottomrule
        \end{tabular}
        }
        \caption{Architecture for convolutional network model.}
        \label{tab:convnet}
    \end{minipage}
\end{table}

\paragraph{Hyperparameters.} DP-SGD has several hyperparameters, and we exhaustive test all setting combinations to produce the scatter plots in Figures \ref{fig:dpsgd_comparison} and \ref{fig:dpsgd_comparison_convnet}. Tables \ref{tab:hyp_mnist} and \ref{tab:hyp_cifar} give the choice of values that we considered for each hyperparameter.

\begin{table}[h!]
    \begin{minipage}{.47\linewidth}
    \centering
    \resizebox{\textwidth}{!}{
    \begin{tabular}{ll}
        \toprule
        \textbf{Hyperparameter} & \textbf{Values} \\
        \midrule
        Batch size & $600$ \\
        Momentum & $0.5$ \\
        \# Iterations $T$ & $500,1000,2000,3000,5000$ \\
        Noise multiplier $\sigma$ for Gaussian and signSGD & $0.5, 1, 2, 3, 5$ \\
        $L_1$-metric DP parameter $\epsilon$ for MVU & $0.25, 0.5, 0.75, 1, 2, 3, 5$ \\
        Step size $\rho$ & $0.01, 0.03, 0.1$ \\
        Gradient norm clip $C$ & $0.25, 0.5, 1, 2, 4, 8$ \\
        \bottomrule
    \end{tabular}
    }
    \caption{Hyperparameters for DP-SGD on MNIST.}
    \label{tab:hyp_mnist}
    \end{minipage}%
    \begin{minipage}{.53\linewidth}
    \centering
    \resizebox{\textwidth}{!}{
    \begin{tabular}{ll}
        \toprule
        \textbf{Hyperparameter} & \textbf{Values} \\
        \midrule
        Batch size & $500$ \\
        Momentum & $0.5$ \\
        \# Iterations $T$ & $1000,2000,3000,5000,10000,15000$ \\
        Noise multiplier $\sigma$ for Gaussian and signSGD & $0.5, 1, 2, 3, 5$ \\
        $L_1$-metric DP parameter $\epsilon$ for MVU & $0.25, 0.5, 0.75, 1, 2, 3, 5$ \\
        Step size $\rho$ & $0.01, 0.03, 0.1$ \\
        Gradient norm clip $C$ & $0.25, 0.5, 1, 2, 4, 8$ \\
        \bottomrule
    \end{tabular}
    }
    \caption{Hyperparameters for DP-SGD on CIFAR-10.}
    \label{tab:hyp_cifar}
    \end{minipage}
\end{table}

\paragraph{Result for convolutional network.} Figure \ref{fig:dpsgd_comparison_convnet} shows the comparison of DP-SGD training with Gaussian mechanism, stochastic signSGD, and MVU mechanism with $b=1$. The experimental setting is identical to that of Figure \ref{fig:dpsgd_comparison} except for the model being a small convolutional network trained end-to-end. We observe a similar result that MVU recovers the performance of signSGD at equal communication budget of $b=1$.

\begin{figure}
\centering
\includegraphics[width=0.5\linewidth]{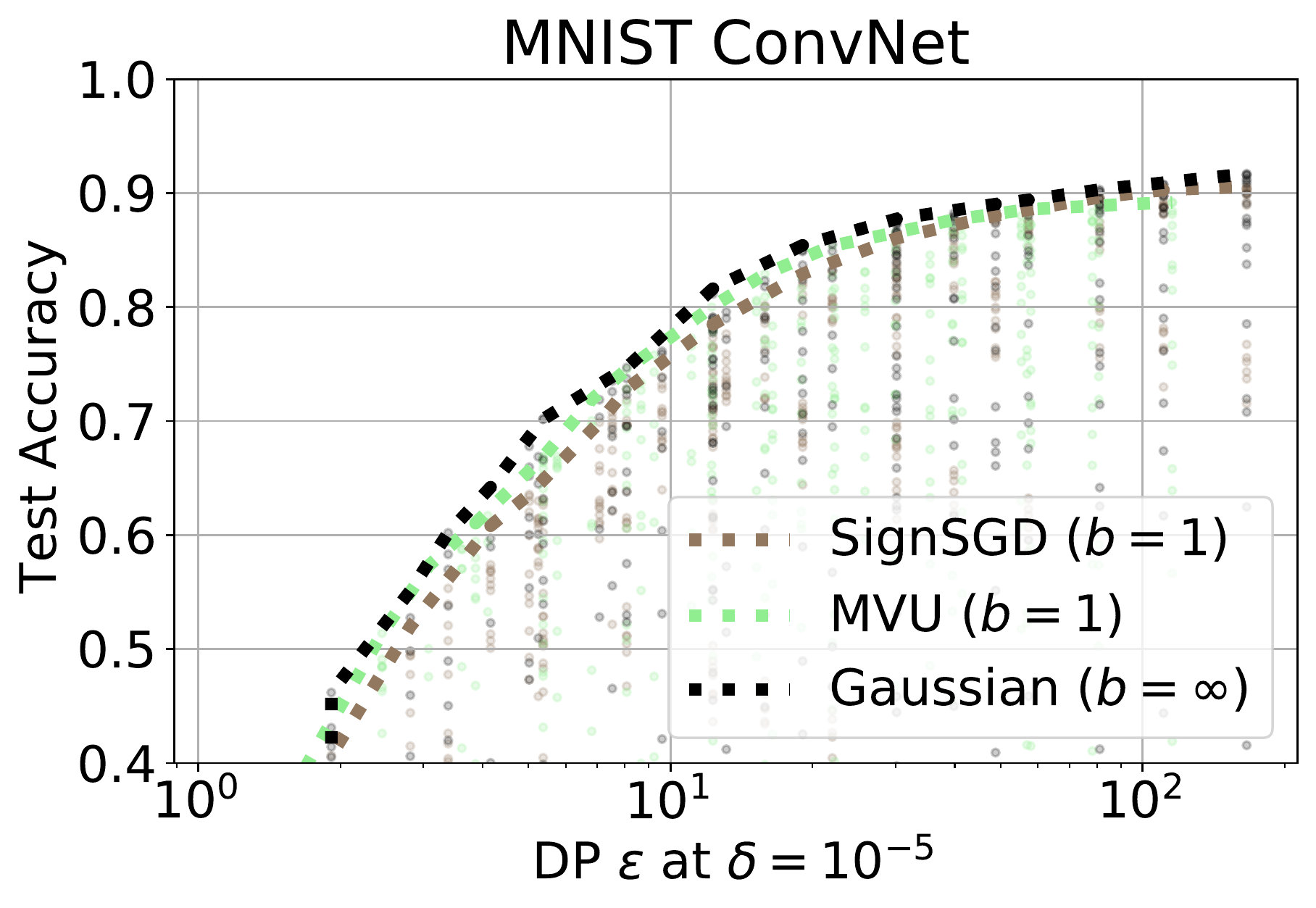}
\caption{DP-SGD training of a small convolutional network on MNIST with Gaussian mechanism, stochastic signSGD and MVU mechanism. Each point corresponds to a single hyperparameter setting, and dashed line shows Pareto frontier of privacy-utility trade-off.}
\label{fig:dpsgd_comparison_convnet}
\end{figure}


\end{document}